\documentclass{article}
\usepackage{enumitem}
\usepackage{microtype}
\usepackage{graphicx}
\usepackage{subcaption}
\usepackage{booktabs} 

\usepackage{hyperref}
\newcommand{\E}{\mathbb{E}}
\newcommand{\dif}{{\mathrm{d}}}
\renewcommand{\P}{{\mathbb{P}}}

\newcommand{\R}{{\mathbb{R}}}

\usepackage[accepted]{icml2026}

\usepackage{amsmath}
\usepackage{amssymb}
\usepackage{mathtools}
\usepackage{amsthm}
\usepackage{svg} 
\svgpath{{figures/}}

\usepackage[capitalize,noabbrev]{cleveref}
\usepackage{amsfonts,amsmath}
\allowdisplaybreaks[4] 
\theoremstyle{plain}
\newtheorem{theorem}{Theorem}[section]
\newtheorem{proposition}[theorem]{Proposition}
\newtheorem{lemma}[theorem]{Lemma}
\newtheorem{corollary}[theorem]{Corollary}
\theoremstyle{definition}

\theoremstyle{remark}
\newtheorem{remark}[theorem]{Remark}

\usepackage[textsize=tiny]{todonotes}

\icmltitlerunning{SURGE Filtering}

\begin{document}

\twocolumn[
  \icmltitle{\texttt{SURGE}: Approximation and Training Free Particle Filter for Diffusion Surrogate}



  \icmlsetsymbol{equal}{*}

  \begin{icmlauthorlist}
    \icmlauthor{Lifu Wei}{me}
    \icmlauthor{Yinuo Ren}{icme}
    \icmlauthor{Naichen Shi}{iems,me}
    \icmlauthor{Yiping Lu}{iems}
  \end{icmlauthorlist}

\icmlaffiliation{me}{Department of Mechanical Engineering, Northwestern University, Evanston, IL, United States}
\icmlaffiliation{icme}{Institute for Computational \& Mathematical Engineering, Stanford University, Stanford, CA, United States}
\icmlaffiliation{iems}{Department of Industrial Engineering \& Management Sciences, Northwestern University, Evanston, IL, United States}
  \icmlcorrespondingauthor{Yiping Lu}{yiping.lu@northwestern.edu}

  \icmlkeywords{Machine Learning, ICML}

  \vskip 0.3in
]



\printAffiliationsAndNotice{}  

\begin{abstract}
Data assimilation (DA) tackles the sequential estimation of a dynamical system's latent state from noisy, partial observations. In this paper, we study DA in a setting where the system dynamics are represented by a pretrained diffusion model used as a surrogate forecaster. We focus on how to integrate incoming observations into the diffusion surrogate's predictions to support continuous state correction and progressively refining the estimated trajectory over time. After receiving noisy observations, the diffusion model is guided using the observation likelihood to steer the generation process toward observation-consistent states. However, such guidance does not guarantee sampling from the true posterior. Motivated by particle filtering methods, we represent the posterior distribution using an ensemble of particles. We perform Sequential Monte Carlo over diffusion trajectories, working with their path measure. We compute importance weights for generated particles and resample to focus on trajectories consistent with the observations. This procedure corrects the generation dynamics, drives the particle approximation toward the desired posterior and leads to an approximation-free particle filtering method that rigorously fuses observational data with diffusion model simulations.
\end{abstract}

\section{Introduction}

Recovering the state of complex dynamical systems from noisy and incomplete observations is a fundamental problem in science and engineering. Accurate state estimation underpins reliable prediction in applications such as weather forecasting, oceanography, seismology, and fluid dynamics, where system evolution is governed by nonlinear, high-dimensional, and often stochastic partial differential equations. Data assimilation (DA) addresses this challenge by combining model-based forecasts with observational data to produce physically consistent state estimates.

\begin{figure}
    \centering
    \vspace{-0.2in}\includegraphics[width=0.8\linewidth]{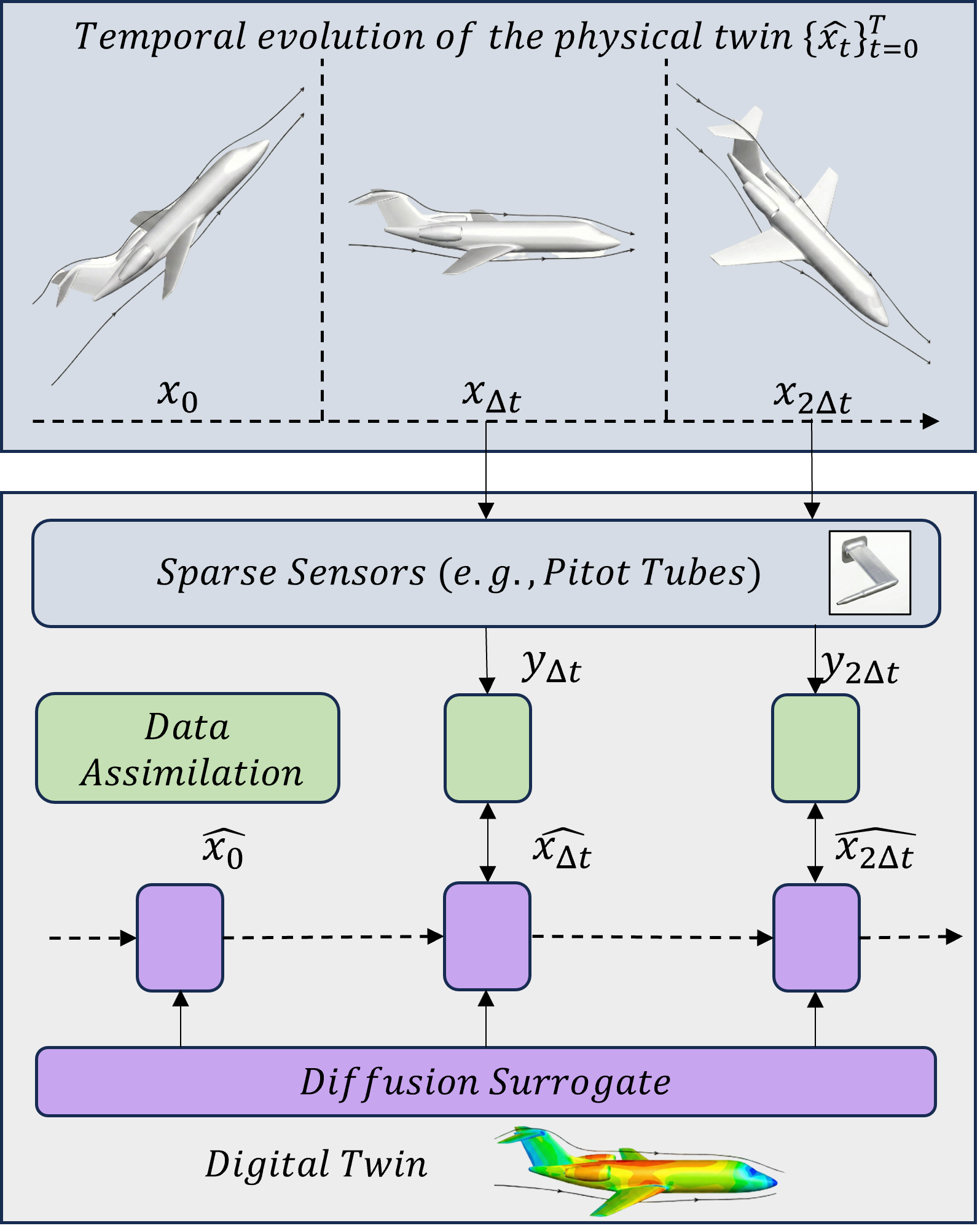}
    \caption{Many modern systems come with complementary “digital” and “physical” twins: a diffusion-based digital twin learns from historical trajectories to generate probabilistic forecasts over dynamics, while sparse and noisy measurements from the physical twin are assimilated to correct and fuse multiple predictions into a single posterior state estimate.}
    \vspace{-0.3in}
    \label{fig:placeholder}
\end{figure}

In many modern settings, we increasingly have access to two complementary “twins” of the same system \cite{reich2025digital}.  On one hand, a data-driven digital twin can be learned from historical trajectories, for which diffusion models are a particularly suitable choice because they can represent high-dimensional complex distributions and generate realistic stochastic evolutions~\cite{ho2022video,gruver2023protein,li2022diffusion,nie2025large}. On the other hand, a physical twin delivers measurements from the real system, often sparse, noisy, and incomplete, but grounded in reality. This motivates a DA perspective in which a diffusion-based digital twin produces probabilistic forecasts, while observations from the physical twin are used to correct and aggregate multiple predictive sources into a single posterior estimate of the system state.

However, when the underlying dynamics are simulated by a diffusion model rather than an explicit mechanistic time-stepping solver, it is nontrivial to design a filtering/assimilation framework that incorporates physical observations in a principled way. In particular, a key challenge is how to fuse the diffusion model's prior forecast distribution with the physical twin's observations to obtain an approximation-free (or controlled-bias), physically consistent posterior, while remaining computationally tractable in the high-dimensional, nonlinear regime.

Mathematically, a discrete-time stochastic dynamical system can be described by:
 \begin{equation}
     \begin{aligned}
     x_{t+1} = \Psi(x_{t},\xi_{t}),\qquad 
         y_{t+1} = \mathcal{A}(x_{t+1}) + \varepsilon_{t+1},
     \end{aligned}
     \label{eq:dynamical_system}
 \end{equation}
 wher $x_t\in\mathbb{R}^D$ is the state vector  at time step $k$, $\Psi$ is the state transition map and $\xi_{t}$ denotes the stochastic force.  The observations $y_t\in\mathbb{R}^M$ are related to the state through the measurement map $\mathcal{A}$, with observation noise $\varepsilon_{t+1}$. DA seeks to infer the posterior of the state trajectory $x_{0:T}$ given observations $y_{1:T}$.

This paper focuses on the case where a dynamic system is simulated using a generative digital twin. More specifically, the conditional transition density \( p(x_{t+1} \mid x_t) \) of dynamical system with transition $x_{t+1} = \Psi(x_t, \xi_t)$ is approximated using a conditional diffusion model~\cite{song2020score,lipman2022flow,liu2022flow,albergo2022building,albergo2025stochastic}. Sampling the next state is implemented by simulating an internal diffusion time process that maps a base distrbution to a sample in state space.
When a new observation arrives, a common practice is to guide diffusion sampling by the observation likelihood, swterring samples toward observation-consistent states.
However, unless the guidance is the exact Doob's $h$-transform~\cite{denker2024deft,tang2024stochastic,nguyen2025h,ren2025unified,sabour2025test,bao2025ensemble,domingo2024adjoint,havens2025adjoint,liu2025adjoint,zhu2026training}, guided sampling does not in general produce samples from the correct posterior. Exact Doob guidance typically requires solving an associated backward Kolmogorov equation \cite{albergo2024nets,zhang2023optimal}, which is intractable in high-dimensional generative models. In most cases, the resulting guidance is necessarily approximate and often heuristic~\cite{ren2025driftlite,chen2025flowdas}. A natural question is thus:
\begin{center}
    \textbf{How can we fuse diffusion-based digital twin simulations with partial observations without introducing systematic bias from approximate guidance?}
\end{center}

To construct an approximation-free data assimilation framework, we draw inspirations from particle filtering, which approximates the filtering posterior by propagating and reweighting an ensemble of particles according to the system dynamics and the observation likelihood. In our approach, we evaluate the probability of each generated trajectory using the Girsanov change of measure, which accounts for the discrepancy between the guided diffusion model dynamics and the target dynamics. We perform Sequential Monte Carlo (SMC) updates at each step to reweight and resample particles. This sequential resampling mitigates particle degeneracy and ensures that the particle ensemble remains concentrated in regions of high posterior probability. To reduce resampling variance, we incorporate the observation likelihood progressively along the trajectory, so that particle weights remain well-balanced at each small time step rather than becoming highly concentrated due to a single, large likelihood update. Building on this idea, we propose the \texttt{SURGE} filter (\texttt{S}equential \texttt{U}nbiased \texttt{R}esampling via \texttt{G}irsanov \texttt{E}stimation), a particle filtering-based framework that enables approximation-free data assimilation for diffusion-model-based simulators. Operating exclusively at the inference phase, \texttt{SURGE} is designed to enhance performance in high-dimensional, long-horizon physical processes characterized by sparse and noisy observations.

\begin{figure}[!t]
\begin{center}
\centerline{\includegraphics[width=\columnwidth]{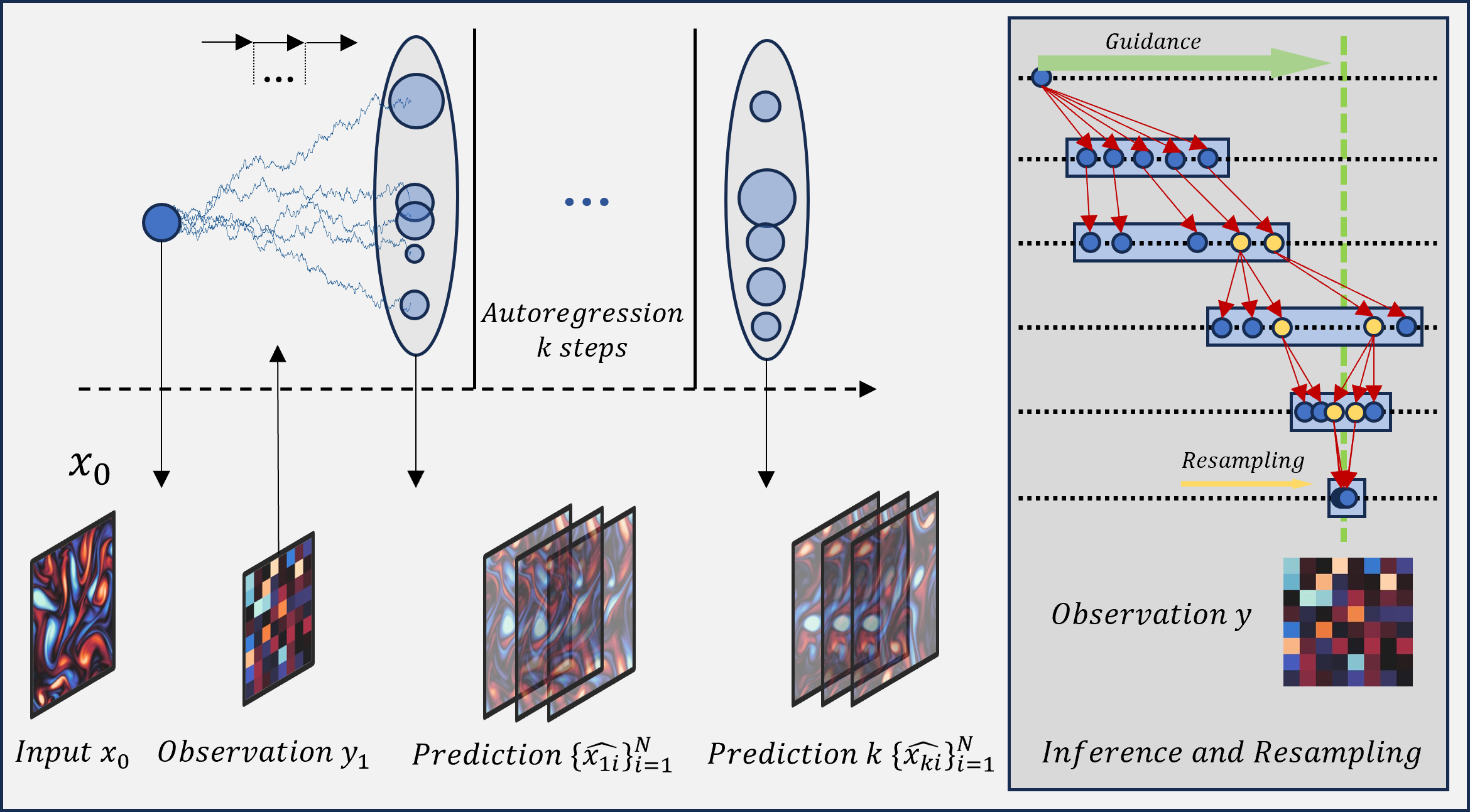}}
\caption{Conceptual description of SURGE for Data Assimilation. We present SURGE, an approximation-free data assimilation framework. SURGE presents diffusion-based process as a path distribution and perform reweighting and resampling on multi stochastic trajectories during inference to achieve approximation-free output.}
\label{fig:surge}
\end{center}
\vspace{-0.3in}
\end{figure}

\vspace{-0.1in}
\paragraph{Contributions.}  Our contributions are as follows:
\vspace{-0.1in}
\begin{itemize}[leftmargin=*,itemsep=5pt]
\setlength{\itemsep}{0pt}
\setlength{\parsep}{0pt}
\setlength{\parskip}{0pt}
    \item We introduce \texttt{SURGE}, an approximation-free data assimilation method that rigorously combines a diffusion surrogate-based digital twin with noisy, partial measurements from a physical twin via particle filtering, enabling sequential state correction over time.
    \item We cast diffusion-based forecasting as inference over a path distribution and perform reweighting and resampling on diffusion trajectories, using a Girsanov change-of-measure to compute importance weights that correct bias introduced by likelihood guidance. We further stabilize filtering in high dimensions via progressive (time-distributed) likelihood incorporation, mitigating particle degeneracy and rigorously fusing observations with diffusion-model simulations.
    \item We apply \texttt{SURGE} solely at the inference level of prevailing diffusion-based data assimilation methods, evaluating it under conditions of long-time sequence prediction and high-dimensional chaotic dynamics. The findings confirm that \texttt{SURGE} adeptly governs sequential generation for diffusion surrogates, enhancing performance during the inference phase.
\end{itemize}

\vspace{-0.2in}
\section{Preliminaries}

In this section, we will briefly review the background of data assimilation and particle filtering.
\vspace{-0.1in}
\subsection{Data Assimilation}

Given the stochastic dynamical system~\eqref{eq:dynamical_system}, the goal of data assimilation~\cite{sanz2023inverse} is to approximate the posterior distribution
\[
p(x_{0:T} \mid y_{1:T}) \propto 
p(x_0)\prod_{t=0}^{T-1}
p(x_{t+1}\mid x_t)
p(y_{t+1}\mid x_{t+1}).
\]
This posterior can be sequentially approximated via a prediction step and an analysis step. 

\textbf{Prediction Step.} In the prediction step, the current posterior is propagated forward through the dynamical model to obtain a prior distribution. Suppose the dynamical system~\eqref{eq:dynamical_system} has the transition kernel \(p(x_{t+1}\mid x_t)\), the prediction step gives the forward predict of latent state at time $t+1$ as
\begin{equation}
    p(x_{t+1} \mid y_{1:t}) = \int p(x_{t+1} \mid x_{t})  p(\dif x_{t} \mid y_{1:t}).
\label{eq:prediction_step}
\end{equation}

\vspace{-0.1in}
\textbf{Analysis Step.} In the analysis step, the prior is updated using incoming observations to produce the posterior at the current time. At time $t+1$, it updates the predictive prior with the new observation \(y_{t+1}\) through the likelihood:
\begin{equation*}
    \begin{aligned}
        p(x_{t+1} \mid y_{1:t+1}) \propto p(y_{t+1}\mid x_{t+1})  p(x_{t+1} \mid y_{1:t}).
    \end{aligned}
\end{equation*}
By alternating between the prediction and analysis steps, the posterior evolves from
\( p(x_t \mid y_{1:t}) \) to \( p(x_{t+1} \mid y_{1:t+1}) \).
\vspace{-0.1in}
\subsection{Particle Filters}
\label{sec:particle_filter}

Within the above discrete-time stochastic dynamical system, particle filtering \cite{gordon1993novel,kitagawa1996monte,del1996nonlinear,johansen2009tutorial} provides a sampling-based approach to approximate the posterior distribution
of the system state given sequential observations. The key idea is to represent
the filtering distribution by a collection of weighted particles that evolve
according to the system dynamics and are corrected using incoming data.

Assume that at time $t$, the posterior distribution $p(x_t \mid y_{1:t})$ is
approximated by an ensemble of $M$ particles 
$$  
    p(x_t \mid y_{1:t}) \approx \frac{1}{M} \sum_{m=1}^M \delta_{x_t^{(m)}}(x_t),
$$ 
where $\{x_t^{(m)}\}_{m=1}^M \subset \mathbb{R}^D$ are $M$ particle with equal weights. The particle filter proceeds recursively from time $t$ to $t+1$ through the following three steps.
\vspace{-0.1in}
\paragraph{Prediction Step.}
Each particle is propagated forward using the stochastic state transition map
$\Psi$~\eqref{eq:dynamical_system}. Specifically, for each $x_t^{(m)}$, a predicted particle is generated by $\hat{x}_{t+1}^{(m)} = \Psi\bigl(x_t^{(m)}, \xi_t^{(m)}\bigr),$ where $\xi_t^{(m)}$ is an independent realization of $\xi_t$. The collection of predicted particles
$\{\hat{x}_{t+1}^{(m)}\}_{m=1}^M$ defines an empirical approximation of the prior
distribution $p(x_{t+1} \mid y_{1:t}) \approx \frac{1}{M} \sum_{m=1}^M
\delta_{\hat{x}_{t+1}^{(m)}}(x_{t+1}).$
\vspace{-0.1in}
\paragraph{Update Step.}
With the new observation $y_{t+1}$, the predicted particle $\hat{x}_{t+1}^{(m)}$ are
reweighted according to the importance weight given by $w_{t+1}^{(m)} \propto
p\bigl(y_{t+1} \mid x_{t+1} = \hat{x}_{t+1}^{(m)}\bigr).$ This yields a weighted empirical approximation of the posterior $p(x_{t+1} \mid y_{1:t+1}) \approx
\sum_{m=1}^M w_{t+1}^{(m)}
\delta_{\hat{x}_{t+1}^{(m)}}(x_{t+1}).$
\vspace{-0.1in}
\paragraph{Resampling Step.}
To mitigate particle degeneracy, resampling is applied to replace the weighted
ensemble with an equally weighted set of particles
$\{x_{t+1}^{(m)}\}_{m=1}^M$ drawn from the above weighted distribution. The resulting
empirical measure $p(x_{t+1} \mid y_{1:t+1}) \approx
\frac{1}{M} \sum_{m=1}^M \delta_{x_{t+1}^{(m)}}(x_{t+1})$ serves as the filtering distribution at time $t+1$, completing one assimilation
cycle. 

Despite its conceptual generality, particle filtering is known to suffer from
severe weight degeneracy in high-dimensional state spaces, where the number of particles required to avoid collapse grows exponentially with the system
dimension, rendering the method numerically less efficient and scalable in practical data assimilation settings.

\vspace{-0.1in}

\subsection{Related Works}

\paragraph{Diffusion Model for Data Assimilation}

In diffusion-based data assimilation, the reverse-time sampler requires the score of the diffused posterior,
$\nabla_{x_t}\log p_t(x_t\mid y_{1:t})$.
However, $p_t(x_t\mid y_{1:t})$ is the marginal obtained by diffusing the true posterior $p(x_t\mid y_{1:t})$ through the forward noising kernel, i.e., $p_t(x_t\mid y)=\int p_t(x_t\mid x_0)p(x_0\mid y)\dif x_0 .$ This marginalization makes the posterior score analytically intractable in general. Consequently, practical methods must approximate this term \cite{rozet2023score,bao2023score,hodyss2025using}.  Another line of research leverages guidance for flow matching \cite{chen2025flowdas,transue2025flow,bao2025ensemble} to construct filters for diffusion surrogates, so that generated samples are consistent with partial observations. However, these guidance procedures remain approximate: they do not in general target the exact posterior, and they cannot provide approximation-free samples from the reward-tilted distribution. To the best of the authors' knowledge, this is the first work to provide an approximation-free filtering algorithm for diffusion-based surrogates of the posterior distribution.

\vspace{-0.1in}
\paragraph{Inference-Time Scaling with Diffusion Prior} When a diffusion model is used as a learned prior, downstream inference can be formulated as approximate posterior sampling~\cite{xu2024provably,wu2024principled,coeurdoux2024plug,bruna2024posterior} or optimization under measurement or task constraints, where each denoising step refines the estimate while enforcing consistency (e.g., via gradient-based corrections or guidance). Many previous works~\cite{wu2023practical,cardoso2023monte,singhal2025general,chen2025solving,ren2025driftlite,skreta2025feynman,uehara2025inference,he2025rne,he2025crepe,ou2025inference,chu2025split,hasan2026discrete,ma2025inference} explored the use of particle filters or related methods for better posterior approximation and constraint satisfaction, making diffusion priors a natural substrate for test-time compute scaling without retraining the prior.

\vspace{-0.1in}
\paragraph{Nonlinear Filtering for Diffusions} Nonlinear filtering for diffusion processes concerns inference of a latent continuous-time state $X_t$
evolving as an SDE from partial and noisy observations $Y_t$.
The optimal conditional law $\pi_t=\mathcal{L}(X_t \mid \mathcal{F}^Y_t)$ is characterized by the
Kushner--Stratonovich equation \cite{kushner1967dynamical,stratonovich1965conditional}, with an equivalent unnormalized Zakai SPDE \cite{zakai1969optimal}. While these SPDE characterizations are complete, they are typically intractable beyond special cases,
especially in high dimensions.
Sequential Monte Carlo (SMC) / particle filtering methods \cite{sottinen2008application,fearnhead2008particle,DelMoral2006_SMC,doucet2000sequential,anonymous2026simple,zhu2026power}, including continuous-time variants, are therefore
widely used, but often suffer from weight degeneracy and poor scaling as dimension grows.

\begin{algorithm*}[t]
\caption{\texttt{SURGE}-Filter: Sequential approximation-free Resampling via Girsanov Estimation}
\begin{algorithmic}[1]
\REQUIRE Initial particles $\{x_0^{(i)}\}_{i=1}^N$, time step $\Delta s$, guidance $G$, variance schedule $\Sigma(t)$, reward $r(\cdot)$, number of steps $T$, resampling threshold $c$.
\FOR{$t = 0$ to $T-1$}
    \FOR{$k = 0$ to $K-1$}
        \STATE \textbf{Propagate particle}: $ x_{t+s_{k+1}}^{(i)} = x_{t+s_k}^{(i)} + \big(v(x_{t+s_k}^{(i)} \mid x_t)+\Sigma(s_k)\nabla_x G(x_{t+s_k}^{(i)}, s_k \mid y_{t+1})\big)\Delta s + \Sigma^{1/2}(s_k)\sqrt{\Delta s}\xi_t^i;$
        \STATE \textbf{Compute weight}: 
        $w_{t+s_{k+1}}^{(i)} =
        \exp\Big(
        (t+s_{k+1})R(x_{t+s_{k+1}}^{(i)})- (t+s_k) R(X_{t+s_k}^{(i)})
        - \Sigma^{1/2}(s_k)\nabla_x G(x_{t+s_k}^{(i)}, s_k \mid y_{t+1})\cdot\sqrt{\Delta s}\xi_t^i
        - \tfrac{1}{2}\Sigma(s_k)\|\nabla_x G(x_{t+s_k}^{(i)}, s_k \mid y_{t+1})\|^2\Delta s
        \Big) w_{t+s_k}^{(i)};$
        \STATE \textbf{Normalize weights}: $\tilde{w}_{t+s_{k+1}}^{(i)} = w_{t+s_{k+1}}^{(i)} / \sum_{j=1}^N w_{t+s_{k+1}}^{(j)};$
        \IF {$\widehat{N}_{\text{eff}} = 1 \big/ \sum_{i=1}^K (\tilde{w}_{t+s_{k+1}}^{(i)})^2 < c$} 
            \STATE Resample $\big\{x_{t+s_{k+1}}^{(i)}\big\}_{i=1}^N \sim \textrm{Categorical}\bigg(
                \big\{x_{t+s_{k+1}}^{(i)}\big\}_{i=1}^N,
                \big\{\tilde w_{t+s_{k+1}}^{(i)}\big\}_{i=1}^N
            \bigg)$ and set $w_{t+s_{k+1}}^{(i)} \leftarrow 1/N$
        \ENDIF
    \ENDFOR
\ENDFOR
\end{algorithmic}
\label{alg:surge}
\end{algorithm*}

\section{\texttt{SURGE}: \texttt{S}equential \texttt{U}nbiased \texttt{R}esampling via \texttt{G}irsanov \texttt{E}stimation}

We aim to perform data assimilation by combining physical observations with simulations from a digital twin which, in our setting, is implemented as a conditional diffusion model approximating the system dynamics; prior works \cite{chen2025flowdas,transue2025flow} rely on guidance mechanisms, as they enable observational constraints to be injected into the sampling process without retraining or explicitly modifying the generative model, leading to stable and effective posterior correction. 

\subsection{Conditional Diffusion Surrogate for Transitions}
\label{sec:condition_diffusion}


Starting from samples drawn from $p(x_t \mid y_{1:t})$, we assume that the one-step transition $p(x_{t+1} \mid x_t)$ is modeled using a \emph{conditional diffusion} sampler  \cite{lipman2022flow,tong2023improving}.
It can be expressed as an stochastic differential equation (SDE) on an internal time variable $s \in [t, t+1]$:
\begin{equation}
\dif x_s = v_\theta(x_s,s-t \mid x_t) \dif s + \Sigma^{1/2}(s-t) \dif W_s, 
\label{eq:diffusion_transition}
\end{equation}
which defines a path measure, denoted as $\P(x_{t:t+1}|y_{1:t})$, on the space of continuous trajectories $C([t,t+1], \R^D)$, with marginals 
$\P(x_t \mid y_{1:t}) = p(x_t \mid y_{1:t})$ and $\P(x_{t+1} \mid y_{1:t}) = p(x_{t+1} \mid y_{1:t})$.

To encourage consistency between the generated next state and the observations, one often modifies the drift during sampling using a guidance term derived from the likelihood~\cite{nichol2021glide,ho2022classifierfree}. Given an observation \( y_{t+1} \), a guided sampler takes the form:
\begin{equation}
\begin{aligned}
    &\dif x_s^G = \bigl[v_\theta(x_s^G, s-t \mid x_t) \\
    &+ \Sigma(s-t) \nabla_x G(x_s^G, s-t \mid y_{t+1})\bigr] \dif s + \Sigma^{1/2}(s-t) \dif W_s,
\end{aligned}
\label{eq:guided_transition}
\end{equation}
where $G: \mathbb{R}^d \times [0,1] \rightarrow \mathbb{R}$ is a control potential, \emph{e.g.}, guidance from a prior model or simplified physics. Similarly, we denote the path measure of the SDE~\eqref{eq:guided_transition} as $\P^G(x_{t:t+1}^G \mid y_{1:t+1})$.

When $G$ equals $\log h$ for the correct Doob's $h$-transform, the guided process corresponds to the exact conditioned dynamics. In practice, a common choice is to set 
$$
    \Sigma(s-t) \nabla_x G(x_s, s \mid y_{t+1}) = \lambda(s-t)\nabla_{x_s} \log p(y_{t+1} \mid x_s),
$$
where \( \lambda \) controls the strength of the guidance. 
However, solving for the exact $h$-function is often intractable and thus the guided evolution~\eqref{eq:guided_transition} with an approximate control is intrinsically biased~\cite{chidambaram2024does}.

In our method, \texttt{SURGE}, we leverage ideas from particle filtering to correct the bias introduced by such approximate guidance. As introduced in Section~\ref{sec:particle_filter}, particle filters enforce consistency with the posterior by reweighting and resampling generated particles $\hat x_{t+1}^{(m)}$ according to their likelihood $p\bigl(y_{t+1} \mid x_{t+1} = \hat{x}_{t+1}^{(m)}\bigr)$ under the observation $y_{t+1}$. In contrast, our method, \texttt{SURGE}, resamples the full trajectories $x_{t:t+1}$ on the path-measure level, instead of at the endpoint $t+1$.

\subsection{Resampling on the Path Space}

To this end, we first need to characterize the distribution defined by the guided generation process and quantify its deviation from the desired posterior. By Girsanov's theorem, the Radon-Nikodym derivative (likelihood ratio) between the uncontrolled (or reference) state transition measure $\mathbb{P}$ and the controlled (or guided) measure $\mathbb{P}^{G}$ is
\begin{equation*}
\small
\begin{aligned}
    &\frac{\dif \P(x_{t:t+1}|y_{1:t}) }{\dif \P^{G}(x_{t:t+1}|y_{1:t+1})} \propto \exp\bigg(\\
    &\qquad \qquad -\int_0^{1} \Sigma^{1/2}(s) \nabla_x G(x_{t+s}, s|y_{t+1}) \cdot \dif W_s \\
    &\qquad \qquad - \frac{1}{2} \int_0^{1} \bigl\| \Sigma^{1/2}(s) \nabla_x G(x_{t+s}, s|y_{t+1}) \bigr\|_2^2 \dif s \bigg).
\end{aligned}
\end{equation*}

As introduced earlier, the analysis step aims to sample from the true posterior distribution of states given observations $y_{1:T}$, which is obtained by ``tilting'' the prior trajectory measure with the observation likelihood: 
$$
    \frac{\dif \mathbb{P}(x_{t:t+1}|y_{1:t+1}) }{\dif \mathbb{P}(x_{t:t+1}|y_{1:t}) } \propto  p(y_{t+1} \mid x_{t+1}).
$$ 
For most imperfect control potentials $G$, the path measure $\mathbb{P}^{G}(\cdot \mid y_{1:t+1})$ generated by controlled simulation does not equal the observation-driven posterior distribution $\mathbb{P}(\cdot \mid y_{1:t+1})$. The discrepancy between them can be characterized using the chain rule for Radon-Nikodym derivatives. For a trajectory $x_{t:t+1}^{G}$ drawn from the guided simulation distribution $\mathbb{P}^{G}(\cdot \mid y_{1:t+1})$, the importance weight that corrects it toward the target posterior $\mathbb{P}(\cdot \mid y_{1:t+1})$ is
\begin{equation}
\small
\label{eq:DA_weight}
    \begin{aligned}
&\frac{\dif \mathbb{P}(x_{t:t+1} \mid y_{1:t+1})}{\dif \mathbb{P}^{G}(x_{t:t+1} \mid y_{1:t+1})} = \frac{\dif \mathbb{P}(x_{t:t+1} \mid y_{1:t+1})}{\dif \mathbb{P}(x_{t:t+1} \mid y_{1:t})} \frac{\dif \mathbb{P}(x_{t:t+1} \mid y_{1:t})}{\dif \mathbb{P}^{G}(x_{t:t+1} \mid y_{1:t})} \\
&\propto \exp\Bigg( \log p(y_{t+1} \mid x_{t+1}^{G}) \\
&\qquad - \int_0^1 \Sigma^{1/2}(s) \nabla_x G(x_{t+s}^{G}, s \mid y_{t+1}) \cdot \dif W_s \\
&\qquad - \frac{1}{2} \int_0^1 \bigl\| \Sigma^{1/2}(s) \nabla_x  G(x_{t+s}^{G}, s \mid y_{t+1}) \bigr\|_2^2 \dif s \Bigg).
\end{aligned} 
\end{equation}


Similar as the particle filter, we propose to use the weighted empirical measure $\sum_{i=1}^N \tilde w_{t+1}^{(i)}  \delta_{X_{t+1}^{G,(i)}}$ as approximation-free approximation of the posterior
distribution $\mathbb{P}(\cdot \mid y_{1:t+1})$. The construction to weights $w_{t+1}^{(i)}$ is based on the Radon--Nikodym derivative in \eqref{eq:DA_weight} as 
$$
    w_{t+1}^{(i)} \propto
    \frac{\dif \mathbb{P}(x_{t:t+1}^{(i)} \mid y_{1:t+1})}
        {\dif \mathbb{P}^{G}(x_{t:t+1}^{(i)} \mid y_{1:t+1})}.
$$ 
These weights are then normalized to obtain $
\tilde w_{t+1}^{(i)} = \frac{w_{t+1}^{(i)}}{\sum_{j=1}^N w_{t+1}^{(j)}}.
$
Resampling is subsequently performed by drawing $N$ particles with
replacement from the weighted empirical measure $\sum_{i=1}^N \tilde w_{t+1}^{(i)}  \delta_{X_{t+1}^{G,(i)}},$ yielding an equally weighted particle approximation of the posterior
distribution $p(x_{t+1} \mid y_{1:t+1})$. This resampling step
mitigates particle degeneracy and completes a standard SMC update.

\subsection{Implementation of \texttt{SURGE}}

The variance of the resampling step depends on the dispersion of the
importance weights. Highly uneven weights lead to large resampling variance and unstable particle behavior. This issue is amplified over long simulations, where accumulated likelihood terms can produce extreme weight scales. To improve stability, we gradually incorporate the likelihood at each time step, thereby controlling weight variability and stabilizing the resampling procedure. 

Specifically, during each data assimilation cycle, we discretize the time interval $[0,1]$ into $K$ uniform subintervals $\{s_k\}_{k=0}^{K}$, where $s_k = k \Delta s$ and $\Delta s = 1/K$. In each window $[s_k, s_{k+1}]$, we simulate $N$ trajectories $\{x_s^{(i),G}\}_{i=1}^{N}$ following the guided SDE~\eqref{eq:guided_transition} and apply a resampling step at every assimilation time step for better performance. Small assimilation time step ensures stable importance weighting and resampling. Within a small time steps, the incremental change in the state $x_s^{(i),G}$ and corresponding reward remains limited. 

To make associated Radon-Nikodym derivative varies more smoothly across different particles, we introduce a $s \log p(x_s^{(i),G}|y_{t+1})$ reward to the intermediate generating process. This change leads to the importance weight for window $[s_k,s_{k+1}]$ to 
\begin{equation}
\label{eq:incremental_weight}
\small
\begin{aligned}
\beta_{t}^{(i)} &= \exp\Bigg(\underbrace{s_{k+1} \log p(y_{s_{k+1}} \mid x_{s_{k+1}}^{(i),G})-s_k \log p(x_{s_k} \mid x_{s_k}^{(i),G})}_{\text{gradually incorporate likelihood}} \\
&- \int_{s_k}^{s_{k+1}} \Sigma(s)^{1/2} \nabla_x G(s_s^{(i),G}, s \mid y_{t+1}) \cdot \dif W_s \\
& - \frac{1}{2} \int_{s_k}^{s_{k+1}} \bigl\| \Sigma(s)^{1/2} \nabla_x G(s_s^{(i),G}, s \mid y_{t+1}) \bigr\|_2^2 \dif s \Bigg).
\end{aligned}
\end{equation}
When $s_k$ and $s_{k+1}$ are sufficiently close, the incremental likelihood term \(
s_{k+1}\log p\left(y_{s_{k+1}} \mid x_{s_{k+1}}^{(i),G}\right)
- s_k \log p\left(y_{s_k} \mid x_{s_k}^{(i),G}\right)\) remains small, implying limited variability in the resulting resampling weights and consequently a reduced resampling variance. The full algorithm is shown in Algorithm~\ref{alg:surge} where we adopt the simple Euler-Maruyama scheme for the computation of the integrals in~\eqref{eq:incremental_weight}. The approximation-freeness of the \texttt{SURGE} filtering is demonstrated in Appendix \ref{app:theory}.




\section{Experiments and Results}

In this section, we present several experiments to validate the effectiveness of our proposed method, \texttt{SURGE}. Specifically, we test on the Lorenz system, a forced incompressible Navier-Stokes system, and a real-world large-scale weather forecasting system.

\subsection{Lorenz System}
To validate the effectiveness of \texttt{SURGE} in data assimilation task, we using the Lorenz 1963 system~\cite{DeterministicNonperiodicFlow}, a canonical mathematical model originally developed to describe atmospheric convection. Due to its highly nonlinear and chaotic nature, this system is widely established as a rigorous benchmark for evaluating data assimilation and generative modeling techniques~\cite{song2020score,chen2025flowdas}.
The system's state vector, $\mathbf{x} = [x, y, z]^\top$, evolves according to a set of stochastic ordinary differential equations (SDEs) defined as:
$$\begin{cases}
\dot{x} &= \sigma(y - x) + \xi_x, \\
\dot{y} &= x(\rho - z) - y + \xi_y, \\
\dot{z} &= xy - \beta z + \xi_z,
\end{cases}$$
We adopt the standard chaotic parameter regime with Prandtl number $\sigma=10$, Rayleigh number $\rho=28$, and geometric factor $\beta=8/3$. The system is subjected to additive Gaussian process noise $\boldsymbol{\xi}=(\xi_x, \xi_y, \xi_z)^\top$, where each component follows $\mathcal{N}(0, 0.05^2)$. To ensure a fair comparison with the baseline FlowDAS~\cite{chen2025flowdas}, we replicate their simulation protocol: the continuous dynamics are integrated using the fourth-order Runge-Kutta (RK4) method, ensuring that any performance variance is attributed to \texttt{SURGE} resampling process rather than numerical discrepancies.

The observation operator $\mathcal{A}$ that acts exclusively on the $x$-coordinate component, while dropping $(y, z)$ coordinate as the sparse observation. The measurement process is governed by:$y_{\text{obs}} = \mathcal{A}(\mathbf{x}) + \eta = \arctan(x) + \eta,$ where $\eta$ denotes Gaussian noise with a standard deviation of $0.05$.

\textbf{Dataset and Experiments.} Following the established experimental design, we generate 1,024 independent trajectories, each comprising 1,024 states. To capture authentic chaotic dynamics, initial states are sampled directly from the system's statistically stationary regime. The dataset is partitioned into training ($80\%$), validation ($10\%$), and evaluation ($10\%$) subsets.

\texttt{SURGE} operates as a training-free, inference-time method, based on baseline setting and model in inference stage. During the testing phase, both methods perform autoregressive generation conditioned on the immediate previous state. We quantitatively evaluate the models by independently estimating 20 trajectories over a prediction horizon of 15 time steps applied in state of art baseline FlowDAS.

\begin{table}[t]
\caption{Comparison of data assimilation results on the Lorenz 1963 experiment. \texttt{SURGE} demonstrates consistent performance enhancement across both SDA and FlowDAS backbones, yielding lower RMSE and $W_1$ than all baselines.}
\label{tab:Lorenz: results_comparison}
\begin{center}
\begin{small}
\begin{sc}
\begin{tabular}{lccc}
\toprule
Method & RMSE $\downarrow$ & $W_1$ $\downarrow$\\
\midrule
BPF (N=20)              & $0.0625$ & $0.0448$ \\
EnKF                    & $0.0624$ & $0.0448$ \\
SDA                     & $0.0589$ & $0.0426$ \\
\quad + \textbf{SURGE}  & $0.0555$ & $0.0396$ \\
FlowDAS                 & $0.0545$ & $0.0388$ \\
\quad + \textbf{SURGE}  & \boldmath{$0.0502$} & \boldmath{$0.0363$} \\
\bottomrule
\end{tabular}
\end{sc}
\end{small}
\end{center}
\end{table}

\begin{figure}[ht]
\begin{center}
\centerline{\includegraphics[width=\columnwidth]{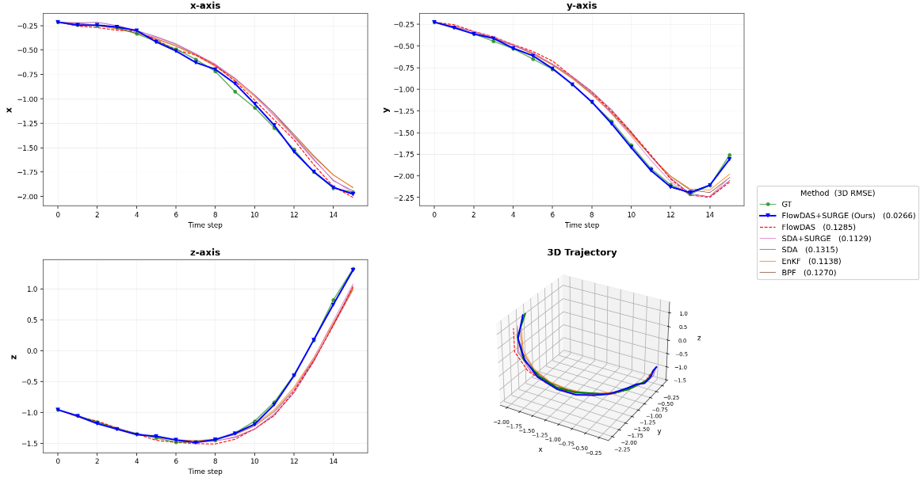}}
\caption{Performance comparison between baseline methods and \texttt{SURGE} on the Lorenz system. The \texttt{SURGE} trajectory aligns more closely with the ground-truth trajectory than the baseline trajectories.
}
\label{fig:LORENZ}
\end{center}
\vspace{-0.3in}
\end{figure}

\paragraph{Results.} 

As demonstrated in Table~\ref{tab:Lorenz: results_comparison} and Figure~\ref{fig:LORENZ}, our proposed \texttt{SURGE} method consistently outperforms classical filtering baselines (BPF, EnKF), the score-based assimilation method (SDA), and the FlowDAS baseline across all evaluated metrics. The observation conditions in this task are particularly challenging due to an extreme partial bias, where only the noisy $x$-coordinate is observable. Furthermore, these observations are subject to high noise levels, especially when the ground truth of $x$ approaches zero. Such conditions frequently lead to misleading guidance in gradient-based methods.

\subsection{Forced Incompressible Navier-Stokes System}

To further investigate the capability of SURGE in high-dimensional complex data assimilation tasks, we employed a 2D forced incompressible Navier-Stokes (NS) system driven by random forcing. The physical process is described using the stream function formulation, with the vorticity field $\omega$ serving as the fundamental state variable $x = \omega$.
The evolution of the system follows the stochastic partial differential equation:
$$
\dif \omega + \boldsymbol{v} \cdot \nabla \omega \dif t = \nu \Delta \omega \dif t - \alpha \omega \dif t + \epsilon \dif \boldsymbol{\xi}$$where the velocity field $\boldsymbol{v} = \nabla^{\perp}\psi = (-\partial_y\psi, \partial_x\psi)$ is coupled to the vorticity via the stream function $\psi$, which satisfies the Poisson equation:$-\Delta\psi = \omega$. 

The observation process is defined by an obervation operator $\mathcal{A}$ that maps the simulated vorticity fields to a noisy, sparse representation:$\boldsymbol{y} = \mathcal{A}(\omega) + \boldsymbol{\eta}.$ In this task, the operator $\mathcal{A}$ linearly downsamples the field or apply sparse observation of partial pixels, while $\boldsymbol{\eta}$ accounts for observational noise.

\paragraph{Dataset and Experiments.} 

Following our experimental protocols, we employed JAX-CFD~\cite{kochkov2021machine}, a GPU-accelerated differentiable simulation framework, to generate high-fidelity datasets. The Navier-Stokes equations were solved via a pseudo-spectral method on a $256^2$ grid with a simulation time step of $\Delta t = 10^{-4}$. Initial conditions were defined by a random vorticity field, $\omega = \nabla \times v$. To ensure physical consistency and eliminate transient effects, the first 50 sampled frames of each trajectory were discarded. We generated 200 trajectories, each providing 200 data frames sampled at intervals of $0.5$ seconds. All snapshots were subsequently downsampled to a resolution of $128^2$. The final dataset was partitioned into training (80\%), validation (10\%), and testing (10\%) sets. Additionally, a separate out-of-distribution test set consisting of 10 trajectories (1,000 frames each) was generated to evaluate long-term autoregressive performance.

We designed two experimental configurations based on the observation modality: super-resolution and sparse observation. \textbf{Super-resolution}: We aimed to reconstruct high-resolution fields $128^2$ from low-resolution inputs of $8^2$. \textbf{Sparse Observation}: We reconstructed the complete $128^2$ fluid vorticity field using only 5\% of the spatial points as sparse observations. Both experiments utilize 4$\times$ temporal downsampling (a 2s interval per step) to evaluate the framework's efficacy with sparse temporal constraints. \texttt{SURGE} utilizes the same pre-trained parameters and architecture as FlowDAS, improve it by optimizing solely at the inference level. For autoregressive prediction, the model is conditioned on a history of $L=10$ previous states. All models were trained on 80\% of the dataset, and evaluations were conducted on 10 unseen trajectories.
We evaluate reconstruction using pixel level RMSE for spatial accuracy and kinetic energy spectrum to compute relative error (KES-RE) for physical fidelity. These metrics evaluate model's performance in both pixel-level precision and multi-scale physical characteristics of the flow.

\textbf{Results.} As the quantitative results in Figure~\ref{fig:NStrajcompare} and Table~\ref{tab:NSresultstable} demonstrate, \texttt{SURGE} maintains a kinetic energy spectrum closer to the ground truth during long-horizon complex flow predictions while simultaneously reducing pixel-level RMSE, outperforming all baselines (BPF, EnKF, SDA, FlowDAS) across both metrics. Qualitative evaluation in Figure~\ref{fig:NSsamplecompare} further shows that even in the final stages of a 100-step autoregressive rollout, \texttt{SURGE} more accurately captures essential structural features, resulting in significantly higher fidelity compared to the baselines. These results indicate \texttt{SURGE} achieves higher pixel-level precision while maintaining the fundamental multi-scale physical characteristics of the flow.

\begin{figure}[ht]
\begin{center}
\centerline{\includegraphics[width=\columnwidth]{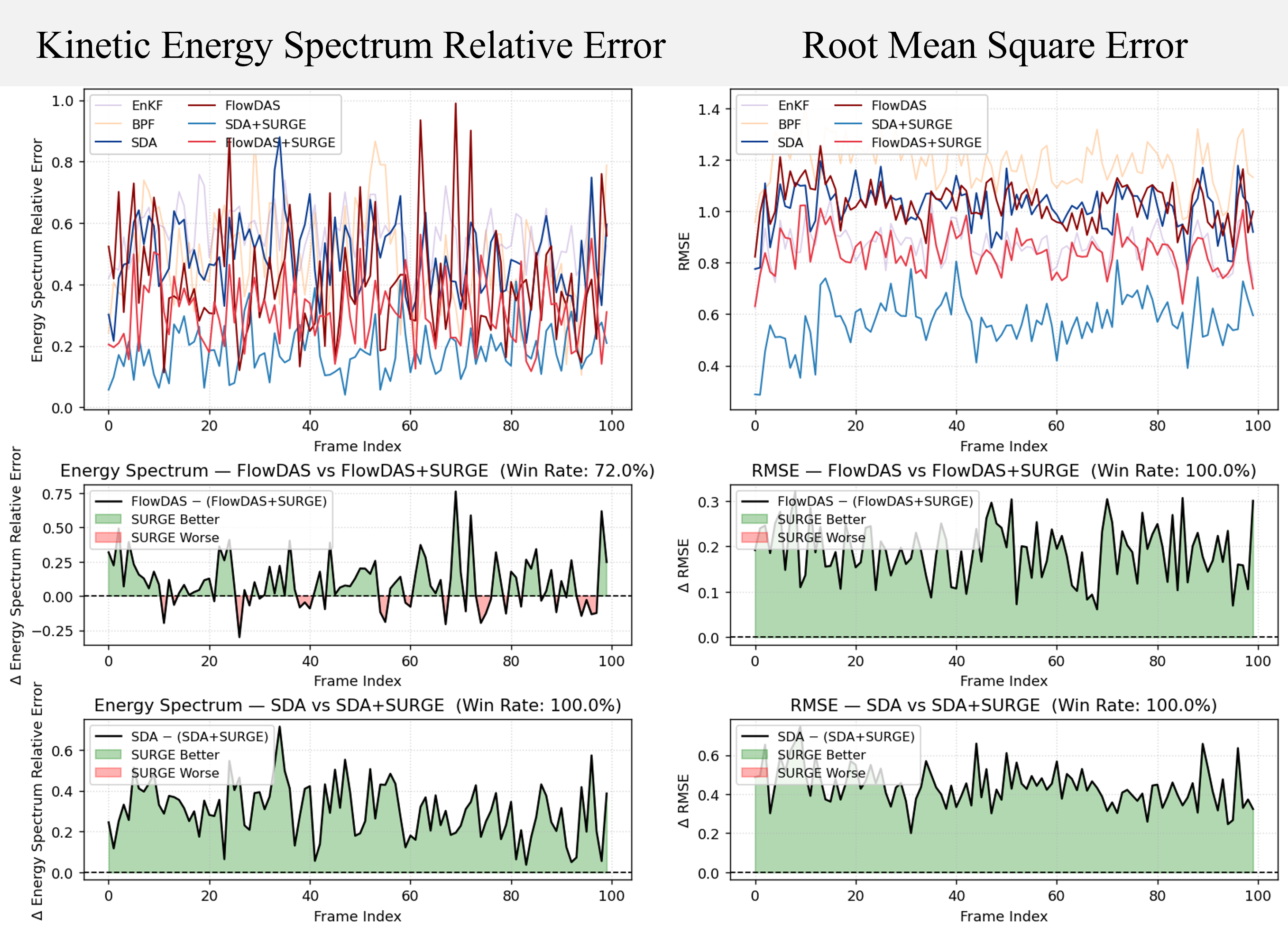}}
\caption{Performance comparison between baseline methods and \texttt{SURGE} on a forced Navier-Stokes system. Relative kinetic energy spectrum error (left) and RMSE (right) over 100 inference frames. The top row compares all baselines (EnKF, BPF, SDA, FlowDAS) against SDA+\texttt{SURGE} and FlowDAS+\texttt{SURGE}, while the middle and bottom rows show the per-frame improvement of \texttt{SURGE} over FlowDAS and SDA backbones respectively, with win rates reported in each subtitle. \texttt{SURGE} demonstrates consistent performance enhancement across both backbones in this high-dimensional chaotic flow task.}
\label{fig:NStrajcompare}
\end{center}
\vspace{-0.2in}
\end{figure}

\begin{figure}[ht]
\begin{center}
\centerline{\includegraphics[width=\columnwidth]{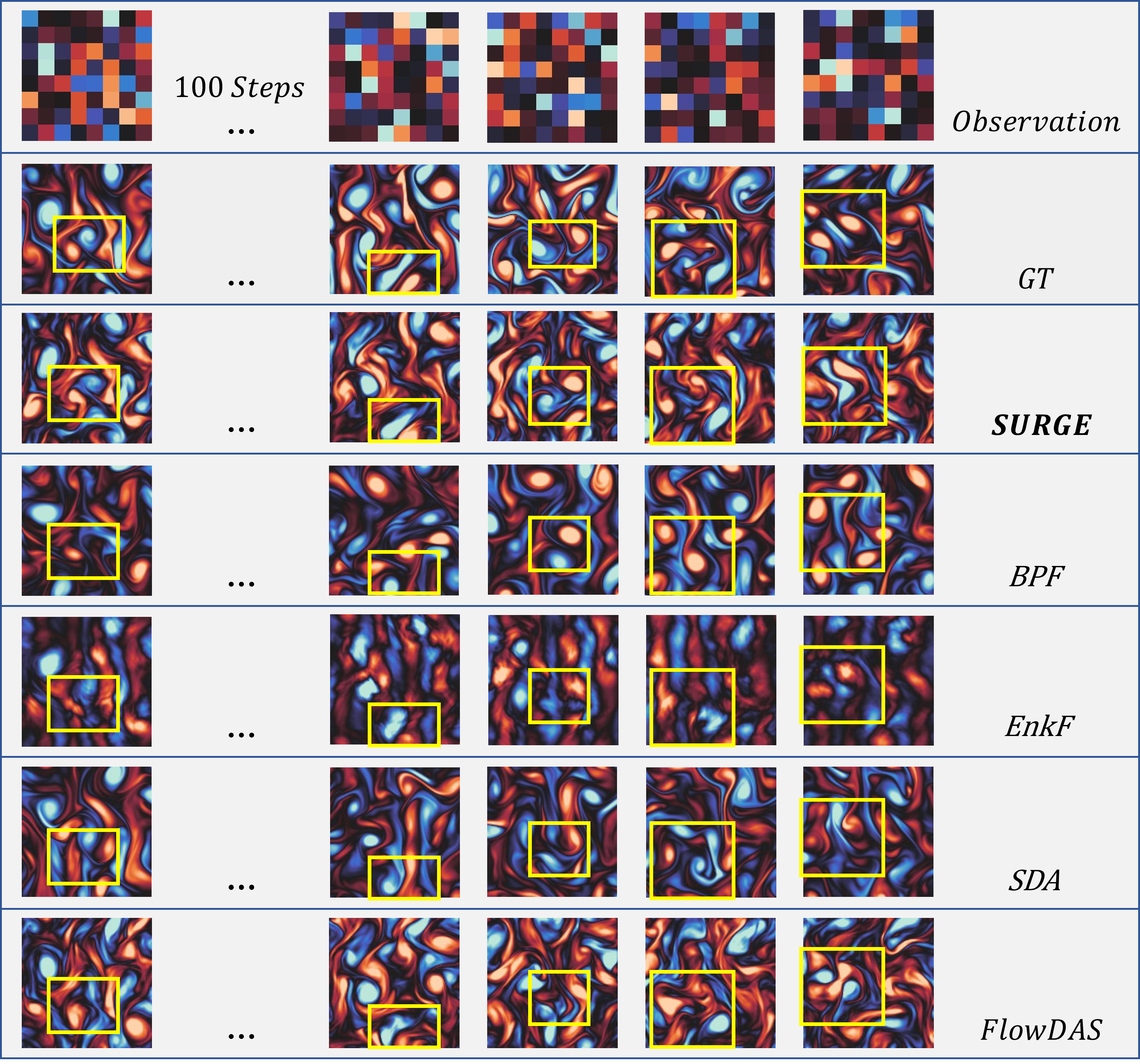}}
\caption{Qualitative comparison of vorticity field reconstruction between baseline methods and \texttt{SURGE}. Positive red values indicate clockwise rotation, and negative blue values indicate counter-clockwise rotation. Yellow boxes highlight regions where \texttt{SURGE} better recovers physical structures under large temporal gaps and sparse observations.}
\label{fig:NSsamplecompare}
\end{center}
\vspace{-0.3in}
\end{figure}

\begin{table}[t]
\caption{Comparison of Navier-Stokes results on \textbf{super-resolution} ($8^2 \to 128^2$) and \textbf{sparse recovery} ($5\% \to 100\%$). \texttt{SURGE} demonstrates consistent performance enhancement across both SDA and FlowDAS backbones, outperforming all baselines.}
\label{tab:NSresultstable}
\begin{center}
\begin{small}
\begin{sc}
\begin{tabular}{lccc}
\toprule
Method & KES-RE$\downarrow$ & RMSE$\downarrow$\\
\midrule
\multicolumn{3}{l}{\textit{Super-resolution} $(8^2\rightarrow128^2)$} \\
\midrule
BPF (N=20)              & $0.490$ & $1.143$ \\
EnKF                    & $0.551$ & $0.847$ \\
SDA                     & $0.473$ & $0.987$ \\
\quad + \textbf{SURGE}  & $0.417$ & $0.966$ \\
FlowDAS                 & $0.401$ & $1.018$ \\
\quad + \textbf{SURGE}  & $\mathbf{0.317}$ & $\mathbf{0.851}$ \\
\midrule
\multicolumn{3}{l}{\textit{Sparse recovery} $(5\%\rightarrow100\%)$} \\
\midrule
BPF (N=20)              & $0.486$ & $1.133$ \\
EnKF                    & $0.676$ & $0.800$ \\
SDA                     & $0.231$ & $0.590$ \\
\quad + \textbf{SURGE}  & $\mathbf{0.207}$ & $\mathbf{0.564}$ \\
FlowDAS                 & $0.543$ & $0.872$ \\
\quad + \textbf{SURGE}  & $0.278$ & $0.673$ \\
\bottomrule
\end{tabular}
\end{sc}
\end{small}
\end{center}
\vspace{-0.2in}
\end{table}

\subsection{Weather Forecasting}

The accurate weather forecasting is a cornerstone of modern environmental analysis. The highly non-linear and chaotic nature of atmospheric systems make them exceptionally difficulty to simulate by traditional numerical based method. Using learning based surrogate model generally become the essential way in such tasks. While data assimilation provide a promising way to build digital twins for physical weather systems. To evaluate \texttt{SURGE} data assimilation method in this challenging field, we utilized the The Storm EVent ImagRy (SEVIR)~\cite{veillette2020sevir} and focus on the Vertically Integrated Liquid (VIL) product as a key value for weather forecasting. 

\textbf{Dataset and Experiments.} To provide a rigorous validation of our method, we adopt the experimental framework established by FlowDAS~\cite{chen2025flowdas} a recent state-of-the-art approach in data assimilation. We utilize the SEVIR dataset, focusing on the Vertically Integrated Liquid (VIL). Following the FlowDAS setting, we emulate the harsh conditions of real-world sensing by randomly sampling only 10\% of the grid cells. Model use information from $L=6$ historical frames to predict the state at $t+10$ min, effectively testing its capacity to handle both extreme data sparsity and complex physical evolution.
We utilize \texttt{SURGE} method in FlowDAS inference step. Following the aviation weather standards established by \cite{robinson2002route}, we evaluate our model across 2 thresholds of $\tau_{20}, \tau_{40}$ dBZ, capturing both general coverage and high-intensity extremes. The Critical Success Index (CSI) and RMSE are used as metrics to account for both hits and false alarms in the predictions.

\textbf{Results.} As shown in Table~\ref{tab:Weather_results_comparison}, \texttt{SURGE} enhances the FlowDAS inference process, achieving lower RMSE and higher CSI scores at target thresholds than all baselines (BPF, EnKF, SDA, FlowDAS). Qualitative comparisons in Figure~\ref{fig:WFSsamplecompare} further demonstrate that \texttt{SURGE} recovers more coherent structures with sharper boundaries, as well as accurate localization and intensity. These results confirm that \texttt{SURGE} effectively improves data assimilation across both SDA and FlowDAS backbones without additional training or data, highlighting its capability in complex real-world physical evolution tasks without explicit equations.

\begin{table}[t]
\caption{Comparison of data assimilation results on the Weather forecast dataset. \texttt{SURGE} demonstrates consistent performance enhancement across both SDA and FlowDAS backbones, yielding lower RMSE and higher CSI scores than all baselines.}
\label{tab:Weather_results_comparison}
\begin{center}
\begin{small}
\begin{sc}
\begin{tabular}{lccc}
\toprule
Method & RMSE $\downarrow$ & CSI($\tau_{20}$)$\uparrow$ & CSI($\tau_{40}$)$\uparrow$\\
\midrule
BPF (N=20)              & $0.0939$ & $0.4146$ & $0.2171$ \\
EnKF                    & $0.1281$ & $0.4970$ & $0.3150$ \\
SDA                     & $0.3511$ & $0.2757$ & $0.1569$ \\
\quad + \textbf{SURGE}  & $0.0925$ & $0.4089$ & $0.2196$ \\
FlowDAS                 & $0.0657$ & $0.5779$ & $0.4044$ \\
\quad + \textbf{SURGE}  & $\mathbf{0.0513}$ & $\mathbf{0.6197}$ & $\mathbf{0.4541}$ \\
\bottomrule
\end{tabular}
\end{sc}
\end{small}
\end{center}
\vspace{-0.3in}
\end{table}

\section{Conclusion and Future Work}
We introduce \texttt{SURGE}, a training-free, plug-and-play data assimilation framework that enhances any diffusion surrogate under sparse, noisy measurements. By leveraging particle filtering during inference, \texttt{SURGE} approximation-freely integrates observations to rectify predictions in high-dimensional, nonlinear dynamical systems. Experiments on Lorenz systems, Navier-Stokes flow, and weather forecasting demonstrate significant performance gains, particularly in challenging tasks like extreme super-resolution and long-term autoregressive prediction under sparse data and rapid dynamic shifts. \texttt{SURGE} serves as a universal online enhancement, effectively correcting diffusion-based sequence models without retraining. However, its efficacy is intrinsically linked to the base model's quality and remains sensitive to hyperparameters. Future work will focus on developing robust reward and guidance mechanisms to accommodate multi-scale inputs and diverse guidance types, further improving system stability in complex physical simulations.

\section*{Impact Statement}
This paper presents work whose goal is to advance the field of machine learning. There are many potential societal consequences of our work, none of which we feel must be specifically highlighted here.

\begin{figure}[H]
\begin{center}
\centerline{\includegraphics[width=\columnwidth]{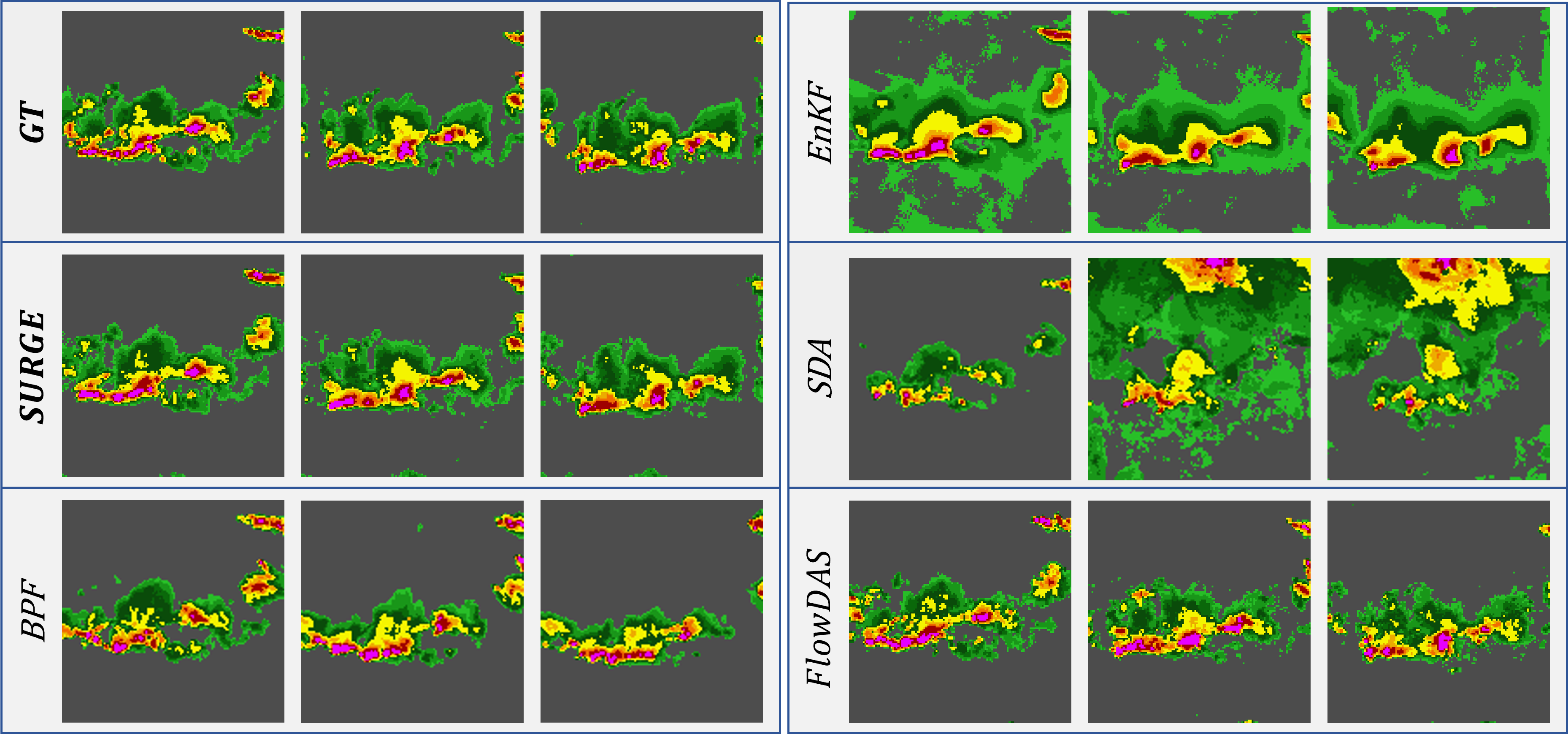}}
\caption{Qualitative and quantitative comparison of precipitation forecasting between baseline methods and \texttt{SURGE}. FlowDAS tends to produce blurry predictions, failing to maintain high-intensity echo regions. In contrast, \texttt{SURGE} effectively preserves sharp boundaries and fine-grained patterns, achieving consistently higher CSI scores across all thresholds.}
\label{fig:WFSsamplecompare}
\end{center}
\vspace{-0.3in}
\end{figure}

\nocite{langley00}

\bibliography{example_paper}
\bibliographystyle{icml2026}

\newpage
\appendix
\onecolumn

\section{Theoretical Analysis: approximation-freeness of the \texttt{SURGE} Filter}
\label{app:theory}

This appendix formalizes the correctness claim behind \texttt{SURGE}: although the guided sampler
\eqref{eq:guided_transition} generally does \emph{not} sample from the filtering posterior, it can be used
as a \emph{proposal} on path space, and the Girsanov-corrected importance weights in \eqref{eq:DA_weight}
exactly map the proposal back to the desired posterior. In particular, \texttt{SURGE} introduces no
systematic bias from using an approximate guidance potential $G$; $G$ only affects proposal efficiency
and weight variance.

\subsection{Posterior as a tilted path measure}

Fix an assimilation step $t\to t+1$. Recall the uncontrolled diffusion surrogate transition
\eqref{eq:diffusion_transition}, whose induced path measure on $C([t,t+1],\R^D)$ we denote by
$\P(\dif x_{t:t+1}\mid y_{1:t})$. The guided sampler \eqref{eq:guided_transition} induces a second path
measure, denoted $\P^G(\dif x_{t:t+1}\mid y_{1:t+1})$.

Let the (physical) observation likelihood be
\[
\ell_{t+1}(x)\;\coloneqq\;p(y_{t+1}\mid x).
\]
The filtering posterior path measure is obtained by tilting the prior path measure by the terminal
likelihood:
\begin{equation}
\label{eq:posterior_path}
\P(\dif x_{t:t+1}\mid y_{1:t+1})
=
\frac{1}{Z_{t+1}}\,
\ell_{t+1}(x_{t+1})\,
\P(\dif x_{t:t+1}\mid y_{1:t}),
\qquad
Z_{t+1}\;\coloneqq\;\E_{\P(\cdot\mid y_{1:t})}\!\left[\ell_{t+1}(X_{t+1})\right].
\end{equation}
The filtering distribution $p(x_{t+1}\mid y_{1:t+1})$ is the marginal of \eqref{eq:posterior_path} at time $t+1$.

\subsection{Girsanov change-of-measure and the \texttt{SURGE} weight}

Define the (whitened) control associated with the guidance potential $G$ by
\[
u_s(x)\;\coloneqq\;\Sigma^{1/2}(s)\,\nabla_x G(x, s\mid y_{t+1}),\qquad s\in[0,1].
\]
Assume the standard Novikov condition (sufficient for Girsanov) holds under the guided measure:
\begin{equation}
\label{eq:novikov}
\E_{\P^G(\cdot\mid y_{1:t+1})}\!\left[\exp\!\left(\frac12\int_0^1\|u_s(X_{t+s}^G)\|_2^2\,\dif s\right)\right] < \infty.
\end{equation}
Then $\P(\cdot\mid y_{1:t})$ is absolutely continuous with respect to $\P^G(\cdot\mid y_{1:t+1})$, and
Girsanov's theorem yields the Radon--Nikodym derivative
\begin{equation}
\label{eq:girsanov_rn}
\frac{\dif \P(\cdot\mid y_{1:t})}{\dif \P^G(\cdot\mid y_{1:t+1})}(x_{t:t+1})
=
\exp\!\Bigg(
-\int_0^1 u_s(x_{t+s})\cdot \dif W_s
-\frac12\int_0^1\|u_s(x_{t+s})\|_2^2\,\dif s
\Bigg),
\end{equation}
where $W$ is the Brownian motion under the guided measure $\P^G(\cdot\mid y_{1:t+1})$.

Combining \eqref{eq:posterior_path} and \eqref{eq:girsanov_rn}, the \emph{unnormalized} importance weight
for a guided trajectory $X_{t:t+1}^G\sim \P^G(\cdot\mid y_{1:t+1})$ is
\begin{equation}
\label{eq:surge_weight}
w_{t+1}(X_{t:t+1}^G)
\;\coloneqq\;
\ell_{t+1}(X_{t+1}^G)\,
\frac{\dif \P(\cdot\mid y_{1:t})}{\dif \P^G(\cdot\mid y_{1:t+1})}(X_{t:t+1}^G),
\end{equation}
which is exactly the continuous-time counterpart of \eqref{eq:DA_weight} (up to the normalizing constant $Z_{t+1}$).

\begin{proposition}[approximation-freeness for the unnormalized posterior functional]
\label{prop:unnormalized}
Let $\Phi:C([t,t+1],\R^D)\to\R$ be bounded and measurable. Under \eqref{eq:novikov},
\begin{equation}
\label{eq:unnormalized_identity}
\E_{\P^G(\cdot\mid y_{1:t+1})}\!\Big[w_{t+1}(X_{t:t+1}^G)\,\Phi(X_{t:t+1}^G)\Big]
=
\E_{\P(\cdot\mid y_{1:t})}\!\Big[\ell_{t+1}(X_{t+1})\,\Phi(X_{t:t+1})\Big]
=
Z_{t+1}\,\E_{\P(\cdot\mid y_{1:t+1})}\!\Big[\Phi(X_{t:t+1})\Big].
\end{equation}
In particular, taking $\Phi(x_{t:t+1})=\phi(x_{t+1})$ gives
\[
\E_{\P^G}\!\Big[w_{t+1}\,\phi(X_{t+1}^G)\Big]
=
Z_{t+1}\,\E\!\big[\phi(X_{t+1})\mid y_{1:t+1}\big].
\]
\end{proposition}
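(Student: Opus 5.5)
The identity is a change of measure followed by a tilt, and I would prove it in exactly that order.

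\textbf{Step 1: Girsanov change of measure.} Write $\mathbb{Q}\coloneqq\P^G(\cdot\mid y_{1:t+1})$ and $\P_0\coloneqq\P(\cdot\mid y_{1:t})$. Under \eqref{eq:novikov}, the process
\[
M_r=\exp\Big(-\int_0^r u_s\cdot \dif W_s-\tfrac12\int_0^r\|u_s\|^2\dif s\Big)
\]
is a true $\mathbb{Q}$-martingale, so $\E_{\mathbb{Q}}[M_1]=1$. Girsanov's theorem then says that under $\dif\widetilde{\mathbb{Q}}=M_1\,\dif\mathbb{Q}$ the process $\widetilde W_s=W_s+\int_0^s u_r\,\dif r$ is a Brownian motion. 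Substituting this into \eqref{eq:guided_transition} cancels the guidance term $\Sigma\nabla_x G=\Sigma^{1/2}u$. Hence, under $\widetilde{\mathbb{Q}}$, the canonical path solves the unguided SDE \eqref{eq:diffusion_transition} with the same initial law $p(x_t\mid y_{1:t})$.

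\textbf{Step 2: identify the law.} Assume (as the paper implicitly does) that \eqref{eq:diffusion_transition} is well posed in law, e.g.\ $v_\theta$ is locally Lipschitz with linear growth and $\Sigma$ is bounded. Weak uniqueness then gives $\widetilde{\mathbb{Q}}=\P_0$ on $C([t,t+1],\R^D)$. This is precisely \eqref{eq:girsanov_rn}: $\dif\P_0/\dif\mathbb{Q}=M_1$.

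\textbf{Step 3: the two equalities.} For bounded measurable $\Phi$, $\ell_{t+1}\Phi$ is a nonnegative-times-bounded function. Unpacking \eqref{eq:surge_weight} gives
\[
\E_{\mathbb{Q}}[w_{t+1}\Phi]=\E_{\mathbb{Q}}[M_1\,\ell_{t+1}(X_{t+1})\Phi]=\E_{\P_0}[\ell_{t+1}(X_{t+1})\Phi],
\]
which is the first equality. The second equality is the definition \eqref{eq:posterior_path} of the tilted measure, integrated against $\Phi$. Integrability is fine: if $\ell_{t+1}$ is bounded (true for Gaussian noise), everything is finite. Otherwise apply the argument to $\Phi^\pm$ and use monotone convergence, needing only $Z_{t+1}<\infty$. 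The corollary follows by choosing $\Phi(x_{t:t+1})=\phi(x_{t+1})$, which is bounded and measurable as a composition with the time-$(t+1)$ evaluation map.

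\textbf{Main obstacle.} The delicate step is Step 2 together with the martingale property of $M$. Girsanov alone only shows that $\widetilde{\mathbb{Q}}$ is \emph{some} weak solution of the unguided SDE. Concluding that it equals $\P_0$ requires uniqueness in law, and also requires that both SDEs are driven by the same initial distribution. I would handle this by stating the regularity assumption explicitly and citing Novikov for $\E_{\mathbb{Q}}[M_1]=1$. If Novikov is only available locally, I would first prove the statement up to localizing stopping times $\tau_n$ and then pass to the limit using boundedness of $\Phi$.
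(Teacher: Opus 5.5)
Your proof is correct and follows the paper: the paper's proof is exactly your Step 3, applying the defining property of the Radon--Nikodym derivative \eqref{eq:girsanov_rn} to $\ell_{t+1}\Phi$ and then the tilt \eqref{eq:posterior_path}, while taking \eqref{eq:girsanov_rn} as given from Girsanov under Novikov. Your Steps 1--2 only fill in that justification, including the weak-uniqueness and common-initial-law requirements that the paper leaves implicit.
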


\begin{proof}
The first equality is the defining property of the Radon--Nikodym derivative in \eqref{eq:girsanov_rn},
and the second equality follows immediately from the tilted-measure definition \eqref{eq:posterior_path}.
\end{proof}

\begin{corollary}[Asymptotic exactness of self-normalized \texttt{SURGE} estimates]
\label{cor:sn_is}
Let $X_{t:t+1}^{G,(1)},\dots,X_{t:t+1}^{G,(N)}\overset{\text{i.i.d.}}{\sim}\P^G(\cdot\mid y_{1:t+1})$ and
define the self-normalized estimator
\[
\hat\pi_{t+1}^N(\phi)\;\coloneqq\;\sum_{i=1}^N \tilde w_{t+1}^{(i)}\,\phi\!\left(X_{t+1}^{G,(i)}\right),
\qquad
\tilde w_{t+1}^{(i)}\;\coloneqq\;\frac{w_{t+1}^{(i)}}{\sum_{j=1}^N w_{t+1}^{(j)}}.
\]
Then $\hat\pi_{t+1}^N(\phi)\to \E[\phi(X_{t+1})\mid y_{1:t+1}]$ in probability as $N\to\infty$.
\end{corollary}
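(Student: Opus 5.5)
The approach is to write the self-normalized estimator as a ratio of two empirical averages and apply the weak law of large numbers to the numerator and the denominator separately, using Proposition~\ref{prop:unnormalized} to identify both limits. Concretely, we write
\[
\hat\pi_{t+1}^N(\phi)=\frac{\frac1N\sum_{i=1}^N w_{t+1}^{(i)}\phi\bigl(X_{t+1}^{G,(i)}\bigr)}{\frac1N\sum_{i=1}^N w_{t+1}^{(i)}}\eqqcolon\frac{A_N}{B_N}.
\]
Throughout, $\phi$ is taken bounded and measurable, as in Proposition~\ref{prop:unnormalized}.

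\textbf{Step 1: integrability.} The weights $w_{t+1}^{(i)}$ are nonnegative and i.i.d. Applying Proposition~\ref{prop:unnormalized} with $\Phi\equiv 1$ gives
\[
\E_{\P^G}[w_{t+1}]=Z_{t+1}.
\]
This is finite because $\ell_{t+1}$ is a likelihood. We assume it is bounded, as for a Gaussian observation noise, so that $Z_{t+1}\le\sup\ell_{t+1}$. We also need $Z_{t+1}>0$, which is the standard nondegeneracy condition that the observation has positive evidence. Since $\phi$ is bounded, $\E\bigl|w_{t+1}\phi(X^G_{t+1})\bigr|\le\|\phi\|_\infty Z_{t+1}<\infty$.

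\textbf{Step 2: laws of large numbers.} The summands of $A_N$ and $B_N$ are i.i.d. and integrable by Step~1, so Khinchin's weak law of large numbers (in fact the strong law) applies with no second moment needed. It gives
\[
A_N\to\E_{\P^G}\bigl[w_{t+1}\phi(X^G_{t+1})\bigr]=Z_{t+1}\,\E[\phi(X_{t+1})\mid y_{1:t+1}],
\qquad
B_N\to Z_{t+1},
\]
in probability. Both limits are the two identities of Proposition~\ref{prop:unnormalized}.

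\textbf{Step 3: continuous mapping.} The map $(a,b)\mapsto a/b$ is continuous at every point $(a,b)$ with $b=Z_{t+1}>0$. Joint convergence in probability of $(A_N,B_N)$ follows from the marginal convergences, so the continuous mapping theorem gives $A_N/B_N\to\E[\phi(X_{t+1})\mid y_{1:t+1}]$ in probability. The event $\{B_N=0\}$ has probability tending to zero, so how the estimator is defined on that event does not matter.

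The main obstacle is Step~1. Integrability of the Girsanov factor in \eqref{eq:girsanov_rn} is not automatic, and it is exactly where the Novikov condition \eqref{eq:novikov} enters. Novikov makes the exponential a true martingale, so the density has unit mean, and Proposition~\ref{prop:unnormalized} then delivers $\E[w]=Z_{t+1}$ through boundedness of $\ell_{t+1}$. I would also state explicitly the assumptions $0<Z_{t+1}$ and $\sup\ell_{t+1}<\infty$, since they are implicit in the corollary. Note that no variance bound is required for convergence in probability, only for rates.
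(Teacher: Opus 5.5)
Your proposal is correct and is essentially the paper's argument. The paper's proof just invokes the standard self-normalized importance sampling law of large numbers together with Proposition~\ref{prop:unnormalized}, and your decomposition into $A_N/B_N$, the law of large numbers for numerator and denominator, and continuous mapping at $B_N\to Z_{t+1}>0$ is exactly that standard argument written out, with the implicit assumptions ($\phi$ bounded, $0<Z_{t+1}<\infty$) made explicit; boundedness of $\ell_{t+1}$ is only a convenient sufficient condition for $Z_{t+1}<\infty$, and by monotone convergence the argument also works for any $\phi$ integrable under the posterior.
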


\begin{proof}
This is the standard self-normalized importance sampling law of large numbers, applied with the target
measure \eqref{eq:posterior_path} and weight \eqref{eq:surge_weight}; see Proposition~\ref{prop:unnormalized}.
\end{proof}

\subsection{Gradual likelihood incorporation as an exact telescoping factorization}

Algorithm~\ref{alg:surge} updates weights along the internal diffusion time $s\in[0,1]$ on a grid
$0=s_0<s_1<\cdots<s_K=1$, with $\Delta s=s_{k+1}-s_k$.

Define the log-likelihood reward
\[
R(x)\;\coloneqq\;\log \ell_{t+1}(x)\;=\;\log p(y_{t+1}\mid x).
\]
The gradual update used in \texttt{SURGE} is an \emph{exact factorization} of the terminal likelihood
$\ell_{t+1}(x_{t+1})$:

\begin{lemma}[Telescoping likelihood decomposition]
\label{lem:telescoping}
For any trajectory $x_{t:t+1}$ and any grid $0=s_0<\cdots<s_K=1$,
\begin{equation}
\label{eq:telescoping_like}
\ell_{t+1}(x_{t+1})
=
\prod_{k=0}^{K-1}\exp\!\Big(s_{k+1}R(x_{t+s_{k+1}})-s_kR(x_{t+s_k})\Big).
\end{equation}
\end{lemma}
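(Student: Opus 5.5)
The plan is to reduce the product to a single exponential of a telescoping sum, which is purely algebraic and holds pathwise for every trajectory. First, since $\exp(a)\exp(b)=\exp(a+b)$, the right-hand side of \eqref{eq:telescoping_like} equals
\[
\exp\!\Big(\sum_{k=0}^{K-1}\big(s_{k+1}R(x_{t+s_{k+1}})-s_kR(x_{t+s_k})\big)\Big).
\]
Next, write $a_k \coloneqq s_k R(x_{t+s_k})$ for $k=0,\dots,K$. The sum inside is then $\sum_{k=0}^{K-1}(a_{k+1}-a_k)$, which telescopes to $a_K-a_0$.

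The boundary terms are then evaluated using the grid endpoints. Since $s_K=1$, we have $a_K=R(x_{t+1})=\log\ell_{t+1}(x_{t+1})$. Since $s_0=0$, we have $a_0=0\cdot R(x_t)=0$. Hence the exponent equals $\log\ell_{t+1}(x_{t+1})$, and exponentiating returns $\ell_{t+1}(x_{t+1})$, as claimed.

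The only step needing care is finiteness. The identity requires $R(x_{t+s_k})$ to be finite at each grid point, in particular at $s_0=0$, so that $0\cdot R(x_t)$ is interpreted as $0$. This holds whenever $\ell_{t+1}>0$ everywhere, as for Gaussian observation noise $\varepsilon_{t+1}$. If $\ell_{t+1}$ can vanish, I would adopt the conventions $\exp(-\infty)=0$ and $0\cdot(-\infty)=0$. With these conventions both sides vanish together, provided $\ell_{t+1}(x_{t+1})=0$. In the intermediate case, where the likelihood vanishes at some interior grid point but not at the terminal point, one must note a caveat: an undefined $\infty-\infty$ can arise, and in that case the formula should be read with these conventions or under the positivity assumption.

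No probabilistic input, such as Girsanov's theorem or \eqref{eq:novikov}, is needed; the lemma is a deterministic identity.
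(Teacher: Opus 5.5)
Your proof is correct and follows the paper's argument: merge the exponentials, telescope the sum, and use $s_0=0$, $s_K=1$ to leave $R(x_{t+1})=\log\ell_{t+1}(x_{t+1})$. Your finiteness remark goes beyond the paper, which implicitly assumes $\ell_{t+1}>0$ (as holds for the Gaussian likelihoods it uses). Note that an undefined $\infty-\infty$ also arises when the likelihood vanishes at both an interior grid point and the terminal point, so the cleanest fix is simply to assume positivity.
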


\begin{proof}
Summing the exponents gives
\[
\sum_{k=0}^{K-1}\big(s_{k+1}R(x_{t+s_{k+1}})-s_kR(x_{t+s_k})\big)
=
s_KR(x_{t+s_K})-s_0R(x_{t+s_0})
=
R(x_{t+1}),
\]
since all intermediate terms cancel and $s_0=0$, $s_K=1$.
\end{proof}

Thus, distributing the likelihood contribution across internal steps does \emph{not} change the final target:
the product of incremental likelihood factors remains exactly $\ell_{t+1}(X_{t+1})$.

\subsection{Discrete-time incremental weights (Euler--Maruyama)}

Let $\Delta W_k \coloneqq W_{s_{k+1}}-W_{s_k}$ so that $\Delta W_k=\sqrt{\Delta s}\,\xi_k$ with
$\xi_k\sim\mathcal N(0,I)$. Approximating the integrals in \eqref{eq:girsanov_rn} by Euler--Maruyama,
and using Lemma~\ref{lem:telescoping}, yields the per-step incremental weight
\begin{equation}
\label{eq:beta_correct}
\beta_{t,k}^{(i)}
=
\exp\!\Big(
s_{k+1}R(X_{t+s_{k+1}}^{G,(i)})-s_kR(X_{t+s_k}^{G,(i)})
- u_{s_k}(X_{t+s_k}^{G,(i)})\cdot \sqrt{\Delta s}\,\xi_k^{(i)}
-\tfrac12\|u_{s_k}(X_{t+s_k}^{G,(i)})\|_2^2\,\Delta s
\Big),
\end{equation}
and the recursion $w_{t+s_{k+1}}^{(i)} = w_{t+s_k}^{(i)}\,\beta_{t,k}^{(i)}$, where
$u_{s_k}=\Sigma^{1/2}(s_k)\nabla_xG(\cdot,s_k\mid y_{t+1})$. This is the discrete analogue of
\eqref{eq:surge_weight} and matches Algorithm~\ref{alg:surge}.

\subsection{Resampling preserves expectations}

Resampling is used to control weight degeneracy; it does not change the represented distribution in
expectation.

\begin{lemma}[Conditional approximation-freeness of multinomial resampling]
\label{lem:resampling}
Let $\{(X^{(i)},\tilde w^{(i)})\}_{i=1}^N$ be weighted particles with $\sum_i \tilde w^{(i)}=1$.
Let $\{\tilde X^{(i)}\}_{i=1}^N$ be obtained by multinomial resampling from
$\sum_i \tilde w^{(i)}\delta_{X^{(i)}}$ and assigning equal weights $1/N$.
Then for any integrable test function $\phi$,
\[
\E\!\left[\frac{1}{N}\sum_{i=1}^N \phi(\tilde X^{(i)})
\,\middle|\,
\{(X^{(j)},\tilde w^{(j)})\}_{j=1}^N
\right]
=
\sum_{j=1}^N \tilde w^{(j)}\,\phi(X^{(j)}).
\]
\end{lemma}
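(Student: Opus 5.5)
Multinomial resampling draws each index independently from the categorical distribution with probabilities $\tilde w^{(1)},\dots,\tilde w^{(N)}$, so the proof is a direct conditional-expectation computation. Write $\mathcal{G}\coloneqq\sigma\big(\{(X^{(j)},\tilde w^{(j)})\}_{j=1}^N\big)$. The resampled particles are $\tilde X^{(i)}=X^{(A_i)}$, where, conditionally on $\mathcal{G}$, the ancestor indices $A_1,\dots,A_N$ are i.i.d.\ with $\P(A_i=j\mid\mathcal{G})=\tilde w^{(j)}$.

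The proof takes three steps.

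First, for a single resampled particle, write $\phi(\tilde X^{(i)})=\sum_{j=1}^N \mathbf 1\{A_i=j\}\,\phi(X^{(j)})$. This is a finite sum, and each $\phi(X^{(j)})$ is $\mathcal{G}$-measurable, so it can be pulled out of the conditional expectation. Hence
\[
\E\big[\phi(\tilde X^{(i)})\mid\mathcal{G}\big]=\sum_{j=1}^N \P(A_i=j\mid\mathcal{G})\,\phi(X^{(j)})=\sum_{j=1}^N\tilde w^{(j)}\phi(X^{(j)}).
\]

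Second, average over $i=1,\dots,N$ and use linearity of conditional expectation. Every term is identical, so the $1/N$ factor cancels and the claimed identity follows.

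Independence across the $A_i$ is not needed. Only the conditional marginal law of each $A_i$ enters, so the same argument also covers residual and stratified schemes, provided each one has the correct marginals.

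The only step needing care is integrability, which makes the conditional expectations well defined. Since $N$ is finite, $|\phi(\tilde X^{(i)})|\le \sum_j|\phi(X^{(j)})|$. The right-hand side is integrable whenever each $\phi(X^{(j)})$ is integrable, and this is what "integrable test function" should be taken to mean here. With that in place, the pull-out property of conditional expectation applies and the computation is rigorous.
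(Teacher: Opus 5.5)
Your proof is correct and follows the paper's own argument: write $\tilde X^{(i)}=X^{(A_i)}$ with conditionally categorical ancestor indices, compute $\E[\phi(\tilde X^{(i)})\mid\mathcal{G}]$ for one index, then average over $i$; your integrability bookkeeping simply makes the pull-out step explicit. One small refinement to your side remark: under stratified or systematic resampling an individual $A_i$ does \emph{not} have conditional law $\tilde w$, but the argument only needs $\frac{1}{N}\sum_{i=1}^N \P(A_i=j\mid\mathcal{G})=\tilde w^{(j)}$ (equivalently, each particle $j$ has expected offspring count $N\tilde w^{(j)}$), and those schemes do satisfy that.
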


\begin{proof}
Conditioned on the current weighted particles, the resampling indices are i.i.d. with
$\P(A^{(i)}=j)=\tilde w^{(j)}$ and $\tilde X^{(i)}=X^{(A^{(i)})}$, hence
$\E[\phi(\tilde X^{(i)})\mid\cdot]=\sum_j \tilde w^{(j)}\phi(X^{(j)})$. Averaging over $i$ gives the claim.
\end{proof}

\begin{remark}
Lemma~\ref{lem:resampling} also holds for standard low-variance schemes (residual, stratified, systematic),
which are approximation-free in the same conditional sense.
\end{remark}

\subsection{Takeaway}

Proposition~\ref{prop:unnormalized} and Corollary~\ref{cor:sn_is} show that \texttt{SURGE} is an
importance sampler on diffusion-time path space whose target is exactly the filtering posterior
\eqref{eq:posterior_path}. Lemma~\ref{lem:telescoping} shows that gradual likelihood incorporation is an
exact factorization of the same terminal likelihood (hence it does not alter the target), and
Lemma~\ref{lem:resampling} shows resampling preserves expectations conditional on the current weighted
particles. Therefore, approximate guidance affects only efficiency and variance, not the target posterior
in the large-particle limit.

\section{Additional Experimental Details}
\label{app:exp_details}

\subsection{Datasets and Problem Settings}
\label{app:datasets}

\paragraph{Lorenz System.} 
We evaluate the framework on the Lorenz 1963 system, a chaotic low-dimensional dynamical system widely used in data assimilation benchmarks. The state vector $x=(a,b,c)^\top \in \mathbb{R}^3$ evolves according to the following stochastic ordinary differential equations (ODEs):
$$
\begin{cases}
\dfrac{\dif a}{\dif t} &= \mu(b-a) + \xi_1, \\
\dfrac{\dif b}{\dif t} &= a(\rho-c)-b + \xi_2, \\
\dfrac{\dif c}{\dif t} &= ab-\tau c + \xi_3,
\end{cases}
$$
where the system parameters are set to standard values: $\mu=10$, $\rho=28$, and $\tau=8/3$. The process noise $\xi=(\xi_1, \xi_2, \xi_3)^\top$ consists of Gaussian components, each with a standard deviation of $\sigma=0.25$.Due to the chaotic nature of the system, we employ the fourth-order Runge-Kutta (RK4) method for numerical simulation. The deterministic state updates from time $t_n$ to $t_{n+1}=t_n+h$ are computed using standard RK4 intermediate steps ($k_1$ through $k_4$) , followed by the addition of the stochastic force.Dataset: We generated 1,024 independent trajectories, each consisting of 1,024 time steps. The data was split into training (80\%), validation (10\%), and evaluation (10\%) sets. Initial states were sampled from the system's statistically stationary regime. The observation operator $\mathcal{A}(\cdot)$ transforms the first state component $a$ using an arctangent function. The observation $y_k$ at time step $k$ is given by:$$y_k = \mathcal{A}(x_k) + \eta_k = \arctan(a_k) + \eta_k,$$where $\eta_k$ represents observation noise. In our experimental setting, $\eta_k$ is drawn from a zero-mean Gaussian distribution $\mathcal{N}(0, \gamma^2)$ with a standard deviation of $\gamma=0.05$.

\textbf{Model Architecture and Training.}
To ensure a fair comparison, we adopted identical experimental settings for both methods. Specifically, all drift model configurations and guidance parameters in \texttt{SURGE} are kept consistent with those used in FlowDAS.

For this low-dimensional task, the drift velocity field $b_s(X_s, X_0)$ is approximated using a fully connected neural network (MLP). The architecture details are as follows:Hidden Layers: 5 layers with a hidden dimension of 256 each.Input/Output: Input and output dimensions are both 3 (corresponding to the state space).Conditioning: The initial state condition $X_0$ and the interpolation time $s$ are injected via embeddings of dimension 4.Training:
The model is optimized using the Adam optimizer with a base learning rate of 0.005. A linear learning rate scheduler is applied, and the training is conducted for 5000 epochs. During the inference stage, we utilize the following hyperparameters:Monte Carlo Sampling ($J$): 21Sampling Step Size ($\zeta$): 0.0002. This step size is set smaller than in other experiments to accurately capture the sensitive dynamics of the chaotic Lorenz system
\paragraph{Navier-stokes Flow System.}
We assess the proposed method on the high-dimensional 2D incompressible Navier-Stokes (NS) equations defined on a torus $\mathbb{T}^2=[0, 2\pi]^2$. The system dynamics for the vorticity field $\omega(x, t)$ are governed by the stochastic partial differential equation :
\begin{equation}
d\omega + (v \cdot \nabla \omega) dt = \nu \Delta \omega dt - \alpha \omega dt + \epsilon d\xi,
\end{equation}
where $v$ is the velocity field related to vorticity by $\omega = \nabla \times v$. The system parameters are set to viscosity $\nu=10^{-3}$, damping coefficient $\alpha=0.1$, and noise magnitude $\epsilon=1$ . The stochastic forcing term $d\xi$ acts on specific Fourier modes.

\textbf{Data Generation and Observation.} The ground truth trajectories are generated using a pseudo-spectral method with a spatial resolution of $256^2$ and a time step of $\Delta t=10^{-4}$. For training and evaluation, the simulation snapshots are downsampled to a resolution of $128^2$ and recorded at intervals of $\Delta t=0.5$. The observation model assumes the measurements $y$ are corrupted by Gaussian noise:
\begin{equation}
y = \mathcal{A}(\omega) + \eta, \quad \eta \sim \mathcal{N}(0, \gamma^2 I),
\end{equation}
where the noise standard deviation is set to $\gamma=0.05$.

\textbf{Model Architecture and Training.} To ensure a fair comparison, we adopted identical experimental settings for both methods. Specifically, all drift model configurations and guidance parameters in \texttt{SURGE} are kept consistent with those used in FlowDAS. We adopt a U-Net architecture to approximate the drift velocity $b_s(X_s, X_0)$, following the configuration in FlowDAS. The network conditions on the past $L=10$ states via channel concatenation. The architecture features an initial channel dimension of 128 with channel multipliers of (1, 2, 2, 2) at subsequent stages . It incorporates group normalization (8 groups) and self-attention mechanisms (4 heads, 64 dimensions per head) to capture multi-scale features . The model is trained using the AdamW optimizer with a cosine annealing learning rate schedule, starting from a base learning rate of $2\times 10^{-4}$ . The training process runs for 1,000 epochs with a batch size of 32.
\paragraph{Weather Forecasting.}
We evaluate the method on the Storm EVent Imagery and Radar (SEVIR) dataset, a large-scale meteorological dataset containing diverse storm events. We focus specifically on the Vertically Integrated Liquid (VIL) product, which serves as a 2D proxy for precipitation intensity. Each data sample consists of a $128 \times 128$ grid representing a $384 \text{km} \times 384 \text{km}$ area, with a temporal resolution of 10 minutes. The forecasting task involves conditioning on $L=6$ past frames (from $t-50$ min to $t$ min) to predict the future state at $t+10$ min. The observations are sparse and noisy. We emulate sparse radar coverage by randomly sampling 10\% of the grid cells as available measurements. We assume the observations are corrupted by Gaussian noise with a standard deviation of $\gamma=0.05$.

\textbf{Model Architecture and Training.} Although the original study suggests using a Fourier Neural Operator for this task, the official FlowDAS implementation currently only provides the U-Net architecture. To ensure alignment with the available codebase, we employ the U-Net as the backbone for the drift velocity model $b_s(X_s, X_0)$. The network is configured with an initial channel dimension of 128 and channel multipliers of (1, 2, 2, 2) across stages. It incorporates ResNet blocks with group normalization (8 groups) and self-attention mechanisms (4 heads, 64 dimensions per head). The network inputs include the concatenation of the previous $L=6$ states and the current interpolation state $X_s$, with the time step $s$ injected via learned sinusoidal embeddings. We observed that retraining the model from scratch using the provided open-source code did not yield acceptable convergence or performance levels. Therefore, to ensure a valid assessment of the framework's capabilities, we directly employed the official pre-trained model weights provided by the FlowDAS authors for this task. During the inference stage, we perform the stochastic generation with $J=25$ Monte Carlo sampling steps and a sampling step size of $\zeta=0.1$. 

\subsection{Implementation Details}

\label{app:implementation}

\textbf{Diffusion Model Architecture.} We employ a U-Net style architecture for the drift model, a design fully adopted from the FlowDAS framework. It is important to note that the input and output configurations vary across the different tasks (i.e., Lorenz, Navier-Stokes, and Weather Forecasting). Specific details regarding these task-dependent settings are provided in the \textit{Datasets and Problem Settings} section.

\textbf{Guidance Term.} For the guidance term $G$, we employ the gradient-guidance mechanism introduced in FlowDAS. This method augments the learned drift term $b_s(X_s, X_0)$ with an observation-dependent correction term during the inference process:
\begin{equation}
b_s(X_s, y, X_0) = b_s(X_s, X_0) + \frac{\nabla \log p(y|X_s, X_0)}{\lambda_s \beta_s}.
\end{equation}
As the likelihood term $\nabla \log p(y|X_s, X_0)$ is analytically intractable, it is approximated via Monte Carlo marginalization using $J$ posterior samples:
\begin{equation}
\nabla \log p(y|X_s, X_0) \approx \sum_{j=1}^{J} w_j \nabla \log p(y|\hat{X}_1^{(j)}),
\end{equation}
where $w_j$ denotes the softmax-normalized likelihood weights derived from the observation error $\|y - \mathcal{A}(\hat{X}_1^{(j)})\|^2_2$. To ensure a strictly fair comparison and isolate the contributions of our proposed improvements, we adopt this guidance strategy and its associated hyperparameters identically to the FlowDAS setting.

\textbf{Reward Term.} We employ a reward mechanism to modulate particle weights, steering the generation towards high-probability states. The specific form of the reward function $r(x)$ depends on the task. For standard observation consistency (likelihood mode), we define the reward based on the measurement residual:
\begin{equation}
r(x) = -\frac{1}{2} \sum \left( \lambda (y - \mathcal{A}(x)) \right)^2,
\end{equation}
where $y$ is the observation and $\mathcal{A}(\cdot)$ is the observation operator. 

\textbf{Hyperparameters.}
\begin{itemize}
    \item Number of particles: $N = 3$ for Lorenz system, $N = 4$ for Navier-stokes flow and $N = 4$ for Weather forecasting.
    \item Resampling threshold (effective sample size):  $c = 0.75N$ for Lorenz system,  $c = 0.5N$ for Navier-stokes flow and  $c = 0.5N$ for weather forecasting.
    \item Diffusion coefficient: $\Sigma(s) = \sigma^2 I$ with $\sigma = 0.1$
    \item Learning rate: $2\times10^{-5}$ for Lorenz, $2\times10^{-5}$ for Navier-stokes flow and $2\times10^{-5}$ for Weather forecasting.
    \item Euler-Maruyama (Diffusion) steps: $600$ for Lorenz, $20$ for Navier-stokes and $500$ for weather forecasting.
\end{itemize}

\subsection{Evaluation Metrics}
\label{app:metrics}

\begin{itemize}
    \item \textbf{Root Mean Square Error (RMSE).} 
    \[
    \text{RMSE} = \sqrt{\frac{1}{TD} \sum_{t=1}^T \| \hat{x}_t - x_t^{\text{true}} \|^2},
    \]
    where $\hat{x}_t$ is the posterior mean estimate.
    
    \item \textbf{Relative Error of Kinetic Energy Spectrum (KES-RE).} 
    \[
    \text{KES-RE} = \frac{\| \hat{E}(k) - E^{\text{true}}(k) \|_2}{\| E^{\text{true}}(k) \|_2},
    \]
    Here, $E(k)$ represents the kinetic energy spectrum, calculated by summing the energy of Fourier modes within concentric shells of wavenumber $k$:
    \[
    E(k) = \sum_{k \le \|\mathbf{k}\| < k+1} \frac{1}{2} \|\hat{\mathbf{v}}(\mathbf{k})\|^2,
    \]

    \item \textbf{Critical Success Index (CSI).} 
    \[
    \text{CSI}(\tau) = \frac{\text{Hits}}{\text{Hits} + \text{Misses} + \text{False Alarms}},
    \]
    where hits, misses, and false alarms are calculated based on a binary map generated by an intensity threshold $\tau$. The threshold $\tau$ signifies the severity of the weather event: a lower $\tau$ (e.g., $\tau_{20}$) assesses the prediction of general rainfall, while a higher $\tau$ (e.g., $\tau_{40}$) specifically evaluates the model's ability to capture extreme events, such as heavy storms.
\end{itemize}

\subsection{Additional Results}

\paragraph{Baselines.}
We compare \texttt{SURGE} against several representative data assimilation methods spanning classical filtering and modern generative approaches. For a fair comparison, BPF, EnKF, and SDA all share the same learned dynamics model as the forecast drift.

\textbf{Bootstrap Particle Filter (BPF)} is a classical sequential Monte Carlo method that propagates weighted particles through the dynamics and reweights them by the observation likelihood\cite{gordon1993novel}; we use $N=20$.

\textbf{Diffusion Model (DM)} refers to a plain diffusion sampler that generates trajectories from the learned prior without observation guidance, included to isolate the contribution of guidance.

\textbf{Ensemble Kalman Filter (EnKF)} maintains a finite ensemble and applies a Kalman-style update under a Gaussian approximation of the forecast distribution\cite{evensen2003ensemble}.

\textbf{Score-based Data Assimilation (SDA)} learns a score model of the trajectory prior and conditions on observations via score-based posterior sampling\cite{rozet2023score}.

\textbf{FlowDAS} is the current state-of-the-art flow-matching framework for data assimilation, which builds a guided proposal from observations\cite{chen2025flowdas}; \textbf{FlowDAS AVG} denotes the average of multiple independent FlowDAS runs. As the strongest existing baseline, FlowDAS is also the primary backbone we plug \texttt{SURGE} into.

\begin{table}[t]
\caption{Full results on the Lorenz 1963 experiment. \texttt{SURGE} consistently improves both SDA and FlowDAS backbones.}
\label{tab:lorenz_full}
\begin{center}
\begin{small}
\begin{sc}
\begin{tabular}{lcc}
\toprule
Method & RMSE $\downarrow$ & $W_1$ $\downarrow$\\
\midrule
BPF (N=20)              & $0.0625$ & $0.0448$ \\
DM                      & $0.0766$ & $0.0549$ \\
EnKF                    & $0.0624$ & $0.0448$ \\
SDA                     & $0.0589$ & $0.0426$ \\
\quad + \textbf{SURGE}  & $0.0555$ & $0.0396$ \\
FlowDAS                 & $0.0545$ & $0.0388$ \\
FlowDAS AVG             & $0.0923$ & $0.0698$ \\
\quad + \textbf{SURGE}  & $\mathbf{0.0502}$ & $\mathbf{0.0363}$ \\
\bottomrule
\end{tabular}
\end{sc}
\end{small}
\end{center}
\end{table}

\begin{table}[t]
\caption{Full results on Navier-Stokes super-resolution (SR, $8^2\to128^2$) and sparse observation (SO, $5\%\to100\%$). \\texttt{SURGE} consistently improves both SDA and FlowDAS backbones.}
\label{tab:ns_full}
\begin{center}
\begin{small}
\begin{sc}
\begin{tabular}{lcccc}
\toprule
& \multicolumn{2}{c}{SR ($8^2\to128^2$)} & \multicolumn{2}{c}{SO ($5\%\to100\%$)} \\
\cmidrule(lr){2-3} \cmidrule(lr){4-5}
Method & KES-RE $\downarrow$ & RMSE $\downarrow$ & KES-RE $\downarrow$ & RMSE $\downarrow$ \\
\midrule
BPF (N=20)              & $0.490$ & $1.143$ & $0.486$ & $1.133$ \\
DM                      & $0.657$ & $1.310$ & $0.663$ & $1.320$ \\
EnKF                    & $0.551$ & $0.847$ & $0.676$ & $0.800$ \\
SDA                     & $0.473$ & $0.987$ & $0.231$ & $0.590$ \\
\quad + \textbf{SURGE}  & $0.417$ & $0.966$ & $\mathbf{0.207}$ & $\mathbf{0.564}$ \\
FlowDAS                 & $0.401$ & $1.018$ & $0.543$ & $0.872$ \\
FlowDAS AVG             & $0.329$ & $0.898$ & $0.315$ & $0.723$ \\
\quad + \textbf{SURGE}  & $\mathbf{0.317}$ & $\mathbf{0.851}$ & $0.278$ & $0.673$ \\
\bottomrule
\end{tabular}
\end{sc}
\end{small}
\end{center}
\end{table}

\begin{table}[t]
\caption{Full results on the Weather forecasting task. \texttt{SURGE} consistently improves both SDA and FlowDAS backbones.}
\label{tab:weather_full}
\begin{center}
\begin{small}
\begin{sc}
\begin{tabular}{lccc}
\toprule
Method & RMSE $\downarrow$ & CSI($\tau_{20}$) $\uparrow$ & CSI($\tau_{40}$) $\uparrow$ \\
\midrule
BPF (N=20)              & $0.0939$ & $0.4146$ & $0.2171$ \\
DM                      & $0.1259$ & $0.3181$ & $0.1486$ \\
EnKF                    & $0.1281$ & $0.4970$ & $0.3150$ \\
SDA                     & $0.3511$ & $0.2757$ & $0.1569$ \\
\quad + \textbf{SURGE}  & $0.0925$ & $0.4089$ & $0.2196$ \\
FlowDAS                 & $0.0657$ & $0.5779$ & $0.4044$ \\
FlowDAS AVG             & $0.0534$ & $0.6163$ & $0.4477$ \\
\quad + \textbf{SURGE}  & $\mathbf{0.0513}$ & $\mathbf{0.6197}$ & $\mathbf{0.4541}$ \\
\bottomrule
\end{tabular}
\end{sc}
\end{small}
\end{center}
\end{table}

\paragraph{Lorenz.}
Additional visualization results of the state trajectories for the Lorenz experiment are presented in Figure~\ref{fig:appendix_Lorenz_1} and Figure~\ref{fig:appendix_Lorenz_2}. It can be clearly figure out that by optimizing inference step with \texttt{SURGE}, FlowDAS baseline achieves better results. Full quantitative results are reported in Table~\ref{tab:lorenz_full}.

\paragraph{Navier-stokes.} Additional results of the trajectories wise performacne for the Navier-stokes experiment are presented in Figure~\ref{fig:appendix_NS_1} and Figure~\ref{fig:appendix_NS_2}. \texttt{SURGE} Outperform in more steps accurate kinetic energy spectrum with ground truth and smaller root mean square error in these results. Full quantitative results are reported in Table~\ref{tab:ns_full}.

\paragraph{Weather forecasting.} Additional qualitative and quantitative results comparison for the weather forecasting experiment are presented in Figure~\ref{fig:wfappendix1}, Figure~\ref{fig:wfappendix2}, Figure~\ref{fig:wfappendix3} and Figure~\ref{fig:wfappendix4}. SURGE outperforms the BASE model by generating more accurate VIL intensity patterns relative to the ground truth and achieving consistently higher Critical Success Index (CSI) scores across various thresholds and time steps in these results. Full quantitative results are reported in Table~\ref{tab:weather_full}. Regarding the specific pixel intensity thresholds (Th) employed in the CSI analysis over the $[0, 255]$ range: The threshold parameter $\tau=0.3$ used in our method corresponds approximately to a normalized value of 0.2 in the $[0, 1]$ interval, which maps to Th=74 (representing moderate intensity). The broadest evaluation threshold, Th=16, captures the lower bound of detectable precipitation. Higher thresholds evaluate performance on increasingly severe weather: Th=133 indicates high intensity, Th=160 and Th=181 represent very high intensities, and Th=219 corresponds to extreme intensity events, matching the VIL Intensity color scale shown in the figures.

\subsection{Analysis}
\paragraph{Dependency in Diffusion surrogate.} While \texttt{SURGE} demonstrates superior performance in most regimes, we observe a specific failure mode in the Lorenz system under partial observation (observing only the $x$-coordinate). As illustrated in Figure \ref{fig:lorenz_failure}, when the trajectory initializes near the null-isocline ($x \approx 0$), the learned drift model (diffusion surrogate) exhibits instability, leading to oscillatory behaviors (Figure \ref{fig:lorenz_failure}, Left). In this scenario, applying \texttt{SURGE} can result in high-variance estimates. Although \texttt{SURGE} correctly attempts to steer the particles towards the observation (green dashed line), the aggressive re-weighting mechanism discards alternative paths. Given the poor quality of the proposal distribution in this specific region, this selection bias leads to trajectory overshooting and deviation from the true state (Figure \ref{fig:lorenz_failure}, Right). This can be further explained by Figure \ref{fig:failure_diffusion_behavior}.

This case highlights a fundamental limitation: \texttt{SURGE} performance is intrinsically coupled with the stability of the underlying drift model. Unlike image generation tasks where semantic guidance allows for diverse outputs that remain perceptually valid, physical forecasting requires precise state recovery.

In simple regimes where the diffusion surrogate is highly confident (i.e., generated samples are nearly identical), \texttt{SURGE} becomes redundant as the proposal distribution already collapses to a deterministic path.In complex regimes where the surrogate exhibits necessary stochasticity, \texttt{SURGE} acts as a de-biasing mechanism. However, under limited particle counts, if the proposal distribution (drift) is fundamentally misaligned or unstable (as seen in the $x \approx 0$ case), the re-sampling process may exacerbate errors rather than correct them. Carefully balancing the output stochasticity (temperature) of the diffusion surrogate with the strength of the \texttt{SURGE} guidance is a non-trivial optimization problem. This is particularly challenging in physical tasks where evaluation is based on deterministic accuracy rather than perceptual diversity. Future work will focus on developing adaptive mechanisms to robustly handle such edge cases without manual parameter tuning.

\paragraph{Ensemble Smoothing Dilemma.} \texttt{SURGE} inherently generates a particle ensemble to approximate the posterior. While this strategy proves highly effective for the Lorenz and Weather forecasting tasks, it introduces a smoothing effect in the Navier-Stokes experiment, where averaging the particle cloud dampens high-frequency flow details. Nevertheless, preserving reasonable details within an ensemble output remains a challenge, as the particle filter is fundamentally designed to output a distribution. Despite the visual smoothing, quantitative analysis demonstrates that the \texttt{SURGE} ensemble output significantly surpasses both the FlowDAS baseline and individual particle trajectories in error metrics. This superiority validates the core effectiveness of our particle filter framework. It is important to note that this trade-off between spectral detail and RMSE is unique to the high-frequency nature of Navier-Stokes flow and is not observed in the other experimental benchmarks. We are working with this problem and trying to develop an interpretable ensemble method for better final output, as demonstrated in Figure \ref{fig:appendix_NS_explain}.

\paragraph{ESS analysis.}
A common concern with particle-filter methods is weight degeneracy: a single particle dominates the ensemble and the effective sample size (ESS) collapses to one. Table~\ref{tab:ess_surge} shows the particle system stays well-distributed. ESS drops as $N$ grows to $100$, as expected for particle filters, but stays well above $1\%$ on every task. The guided proposal avoids full weight collapse.
ESS depends on the quality of the proposal, and in \texttt{SURGE} the proposal is shaped by observation guidance. Stochasticity in the diffusion sampler spreads particles, but guidance pulls them back toward high-likelihood regions, so the post-guidance weights stay closer to uniform. The ESS we report reflects this alignment with the posterior, not low observation noise. On harder tasks, \texttt{SURGE} becomes more selective and drops low-weight particles through resampling.

To check this, we run an ablation on Navier-Stokes and Weather ($N=50$, on a subset), replacing FlowDAS with a plain diffusion model that receives no observation guidance (Table~\ref{tab:ess_ablation_guidance}). ESS drops sharply (NS: $20.9\% \to 7.7\%$; Weather: $51.7\% \to 19.7\%$).

\begin{table}[t]
\caption{Effective sample size (ESS/$K$) of \texttt{SURGE} across tasks and backbones, evaluated at $N=4$ and $N=100$ particles. ESS decreases as $N$ grows, consistent with weight collapse, but \texttt{SURGE} maintains substantially non-degenerate weights even at $N=100$.}
\label{tab:ess_surge}
\begin{center}
\begin{small}
\begin{sc}
\begin{tabular}{lcccc}
\toprule
& \multicolumn{2}{c}{$N=4$} & \multicolumn{2}{c}{$N=100$ One step} \\
\cmidrule(lr){2-3} \cmidrule(lr){4-5}
Task & SDA+SURGE & FlowDAS+SURGE & SDA+SURGE & FlowDAS+SURGE \\
\midrule
Lorenz  & $82.9\%$ & $81.5\%$ & $75.2\%$ & $71.3\%$ \\
NS-SR   & $59.2\%$ & $50.8\%$ & $47.2\%$ & $18.6\%$ \\
NS-SO   & $59.9\%$ & $51.4\%$ & $40.2\%$ & $19.5\%$ \\
Weather & $51.7\%$ & $85.9\%$ & $30.1\%$ & $45.3\%$ \\
\bottomrule
\end{tabular}
\end{sc}
\end{small}
\end{center}
\end{table}

\begin{table}[t]
\caption{Ablation on the role of observation guidance ($N=50$). Replacing FlowDAS with a plain diffusion model (DM) that receives no observation guidance leads to a notable drop in ESS, supporting the view that guidance is the primary driver of effective sample size.}
\label{tab:ess_ablation_guidance}
\begin{center}
\begin{small}
\begin{sc}
\begin{tabular}{llc}
\toprule
Backbone & Task & Mean ESS/$K$ $\uparrow$ \\
\midrule
DM       & Navier-Stokes & $7.7\%$  \\
FlowDAS  & Navier-Stokes & $20.9\%$ \\
\midrule
DM       & Weather       & $19.7\%$ \\
FlowDAS  & Weather       & $51.7\%$ \\
\bottomrule
\end{tabular}
\end{sc}
\end{small}
\end{center}
\end{table}

\paragraph{Ablation Study.} To further investigate the efficacy of the proposed \texttt{SURGE} framework, we conducted ablation study. We firstly focus on the Lorenz system experiment to demonstrate the results with maximum clarity. This choice is motivated by the highly distinct nature of chaotic Lorenz trajectories, which offers the most discriminative visualization for analyzing the specific contributions of each component. We decompose the weight computation in the \texttt{SURGE} particle filter into two essential components to ensure accurate posterior estimation. The first is the Reward Term, defined as $(t+\Delta t)R(X_{t+\Delta t}^i)-tR(X_t^i)$, which quantifies the incremental gain in data fidelity by evaluating the alignment between the predicted state and observations via pixel-level likelihoods. The second is the Guidance Term, expressed as $- V(t)\nabla_x G(X_t^i|x_t)\cdot\sqrt{\Delta t}\xi_t^i - \tfrac{1}{2}V(t)^2\|\nabla_x G(X_t^i|x_t)\|^2\Delta t$, which functions as a Girsanov correction factor that mathematically compensates for the bias introduced by the external guidance drift, ensuring that the particle weights correctly reflect the approximation-free target posterior distribution. 
The impact of individual components is visualized in the ablation study. Figure \ref{fig:abla_guid_lorenz} and Figure \ref{fig:abla_reward_lorenz} display the predicted trajectories upon removing the guidance and reward terms, respectively. Notably, removing all \texttt{SURGE} features reverts the performance to the FlowDAS baseline, as shown in Figure \ref{fig:abla_all_lorenz}. Collectively, these results demonstrate the importance of both terms for accurate trajectory estimation. The quantitative result over evaluation data is in Table \ref{tab:lorenz_ablation}, shows both terms are essential for \texttt{SURGE} accurate prediction.

\begin{figure}[ht]
\begin{center}
\centerline{\includegraphics[width=0.8\columnwidth]{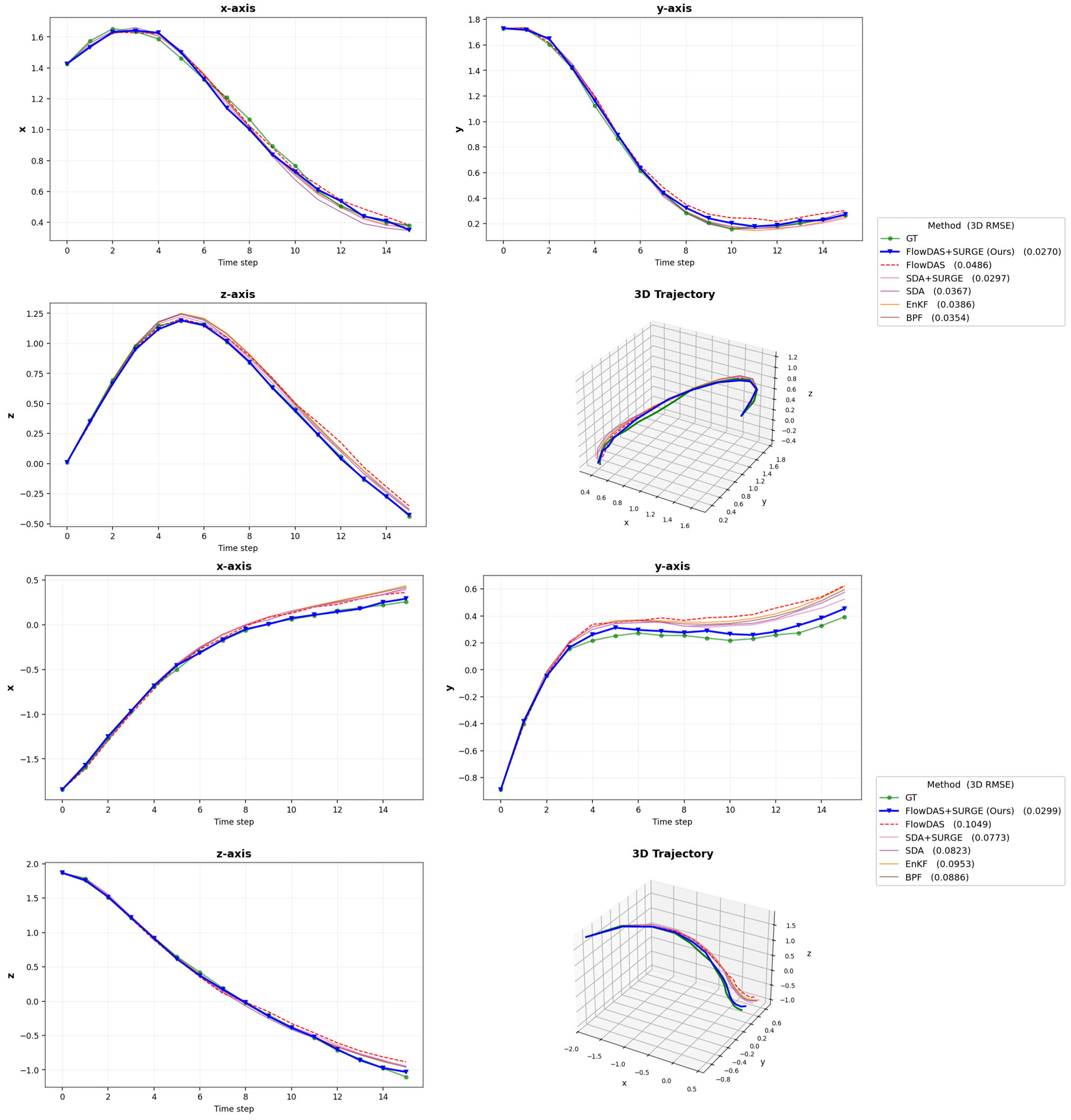}}
\caption{More trajectory wise comparison between baselines and \texttt{SURGE} on Lorenz system. The blue \texttt{SURGE}'s trajectory align closer to green real trajectory compare with baselines' result.}
\label{fig:appendix_Lorenz_1}
\end{center}
\end{figure}

\begin{figure}[ht]
\begin{center}
\centerline{\includegraphics[width=0.8\columnwidth]{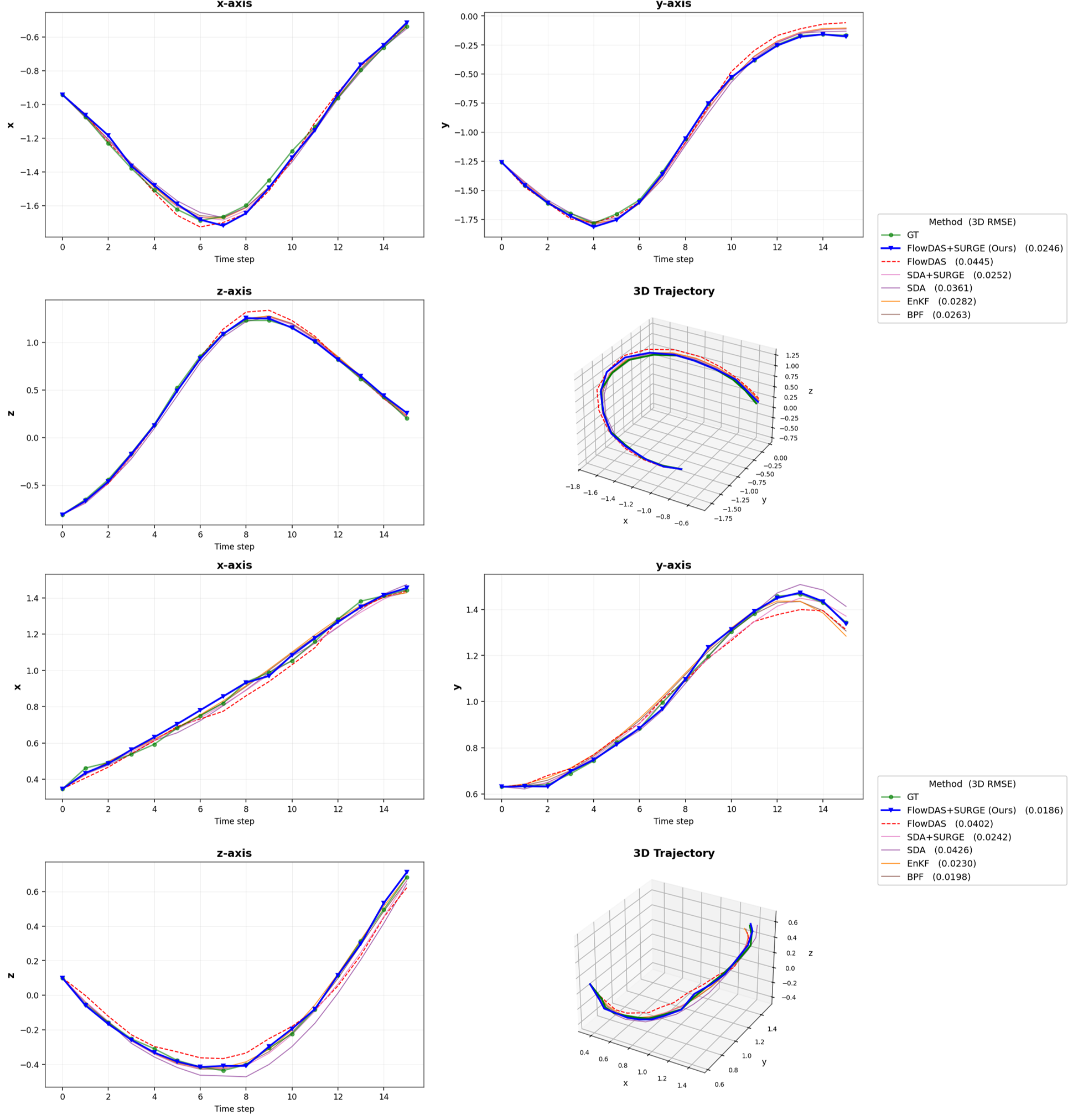}}
\caption{More trajectory wise comparison between baselines and \texttt{SURGE} on Lorenz system. The blue \texttt{SURGE}'s trajectory align closer to green real trajectory compare with baselines' result.}
\label{fig:appendix_Lorenz_2}
\end{center}
\end{figure}

\begin{figure}[ht]
\begin{center}
\centerline{\includegraphics[width=0.8\columnwidth]{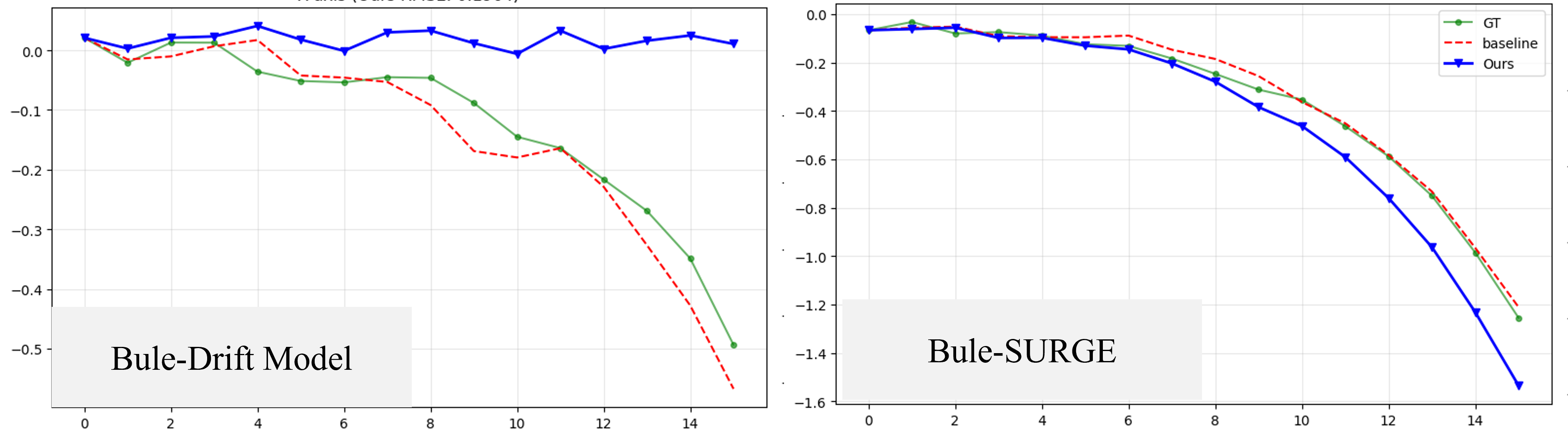}}
\caption{Failure case analysis of the Lorenz system under partial observation. When the trajectory is initialized near the null-isocline ($x \approx 0$), the drift model exhibits significant oscillatory instability (Left). In this scenario, although the \texttt{SURGE} guidance attempts to correct the bias, the poor quality of the underlying proposal distribution leads to excessive concentration of particle weights and trajectory overshooting (Right), highlighting the algorithm's dependence on the stability of the surrogate model.}
\label{fig:lorenz_failure}
\end{center}
\end{figure}

\begin{figure}[ht]
\begin{center}
\centerline{\includegraphics[width=.5\columnwidth]{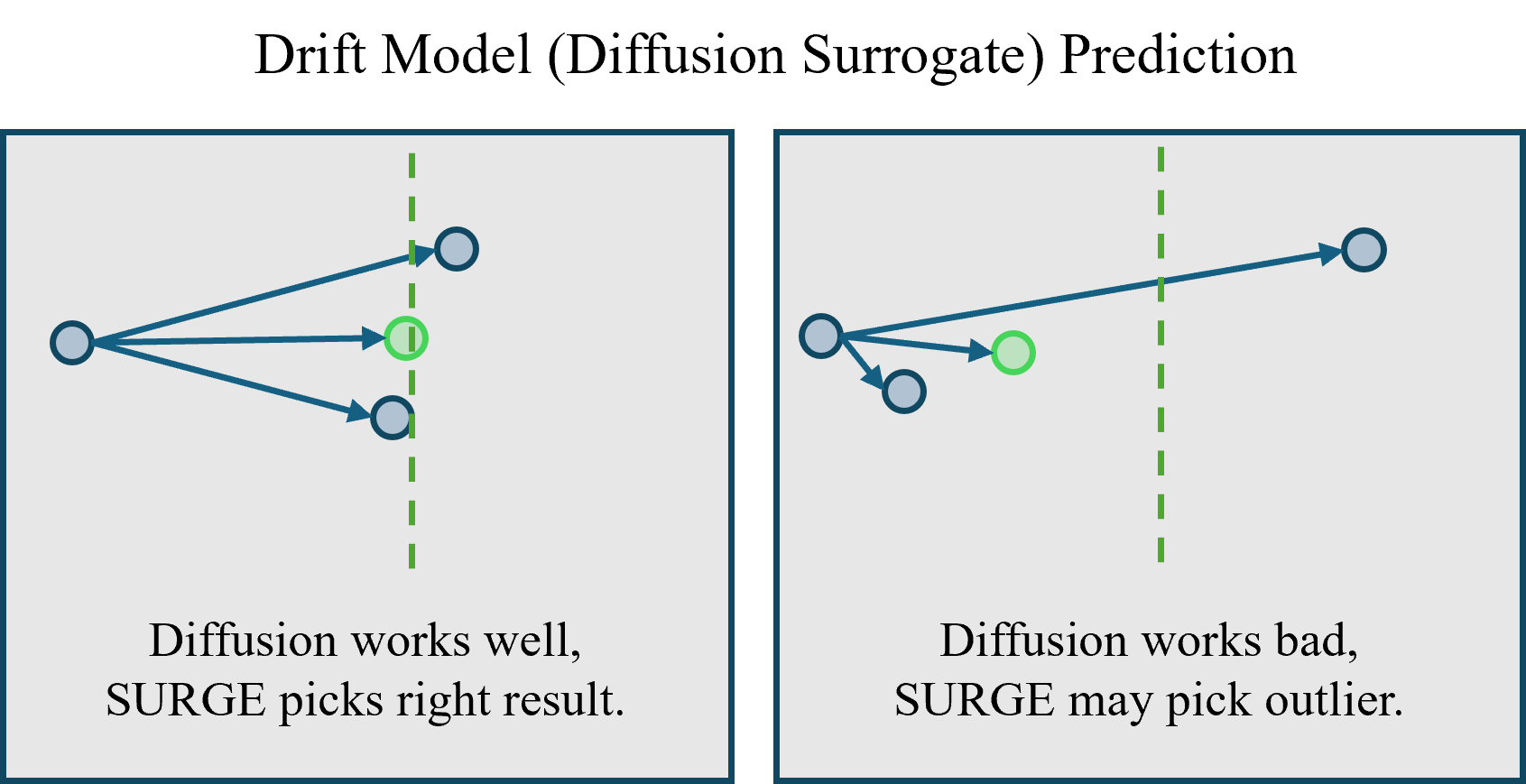}}
\caption{Illustration of \texttt{SURGE} behavior under unstable and erroneous predictions from the diffusion surrogate, and the resulting particle degeneracy.}
\label{fig:failure_diffusion_behavior}
\end{center}
\end{figure}

\begin{figure}[ht]
\begin{center}
\centerline{\includegraphics[width=.8\columnwidth]{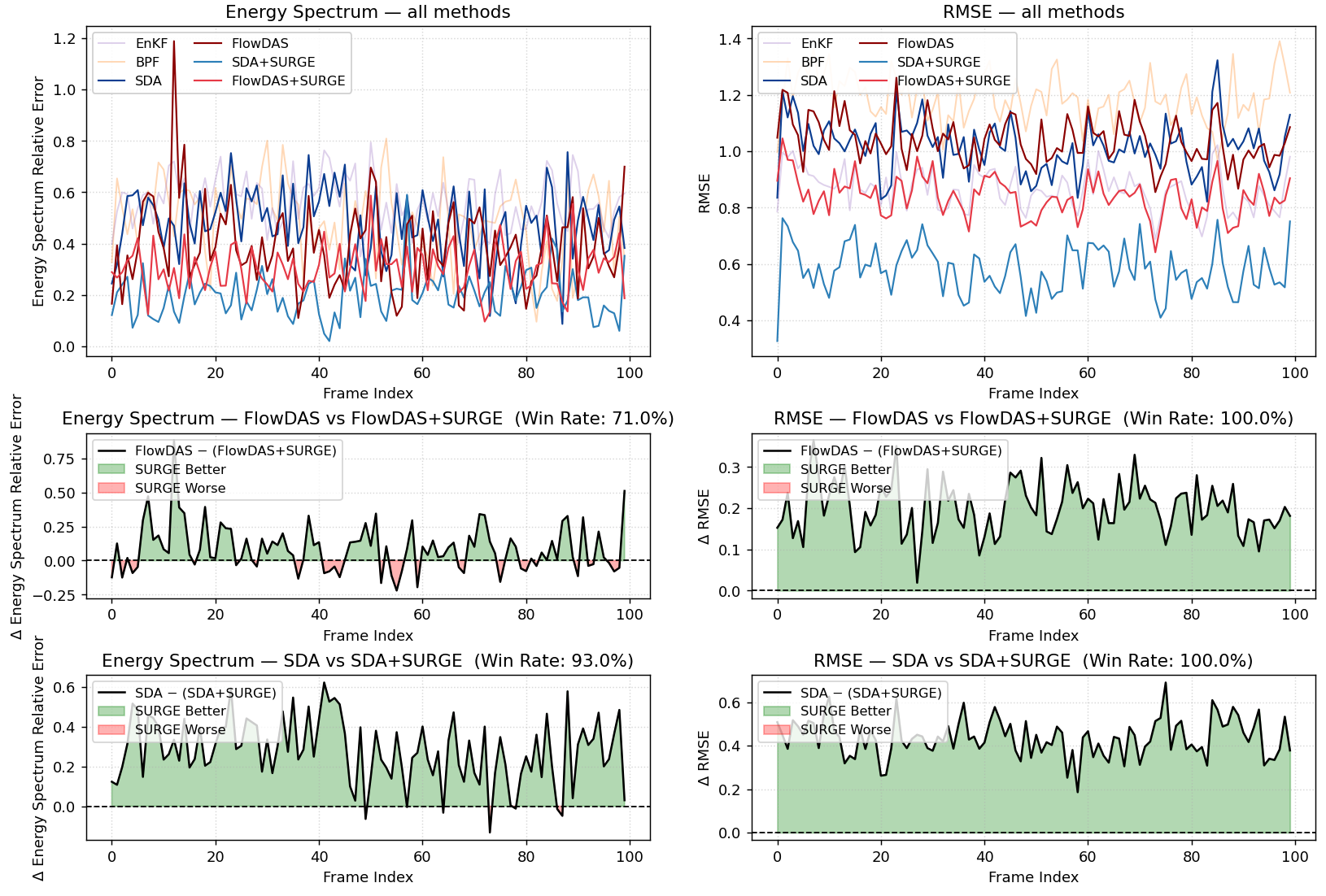}}
\caption{More trajectory wise comparison between baselines and \texttt{SURGE} on Navier-stokes flow in terms of Energy Spectrum Relative Error and RMSE, demonstrating that \texttt{SURGE} consistently outperforms all baselines.}
\label{fig:appendix_NS_1}
\end{center}
\end{figure}

\begin{figure}[ht]
\begin{center}
\centerline{\includegraphics[width=.8\columnwidth]{fig_new/NSmore0.png}}
\caption{More trajectory wise comparison between baselines and \texttt{SURGE} on Navier-stokes flow in terms of Energy Spectrum Relative Error and RMSE, demonstrating that \texttt{SURGE} consistently outperforms all baselines.}
\label{fig:appendix_NS_2}
\end{center}
\end{figure}

\begin{figure}[ht]
\begin{center}
\centerline{\includegraphics[width=.7\columnwidth]{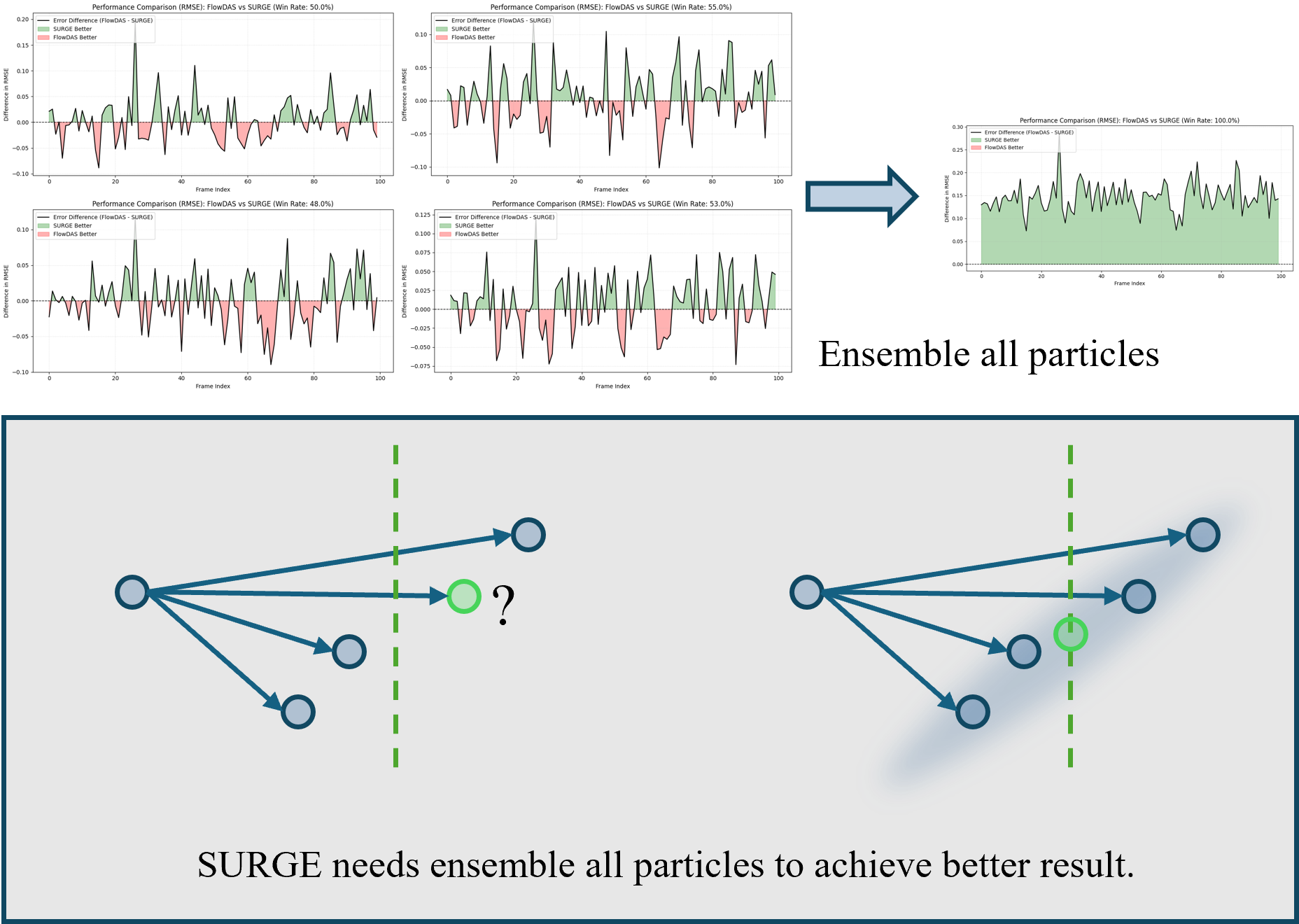}}
\caption{Impact of Ensemble Averaging. Individual particles (left) exhibit high stochastic variance, whereas the ensemble mean (right) effectively outperform in RMSE. This shows that \texttt{SURGE} relies on all particles for robust estimation.}
\label{fig:appendix_NS_explain}
\end{center}
\end{figure}

\begin{figure}[ht]
\begin{center}
\centerline{\includegraphics[width=.8\columnwidth]{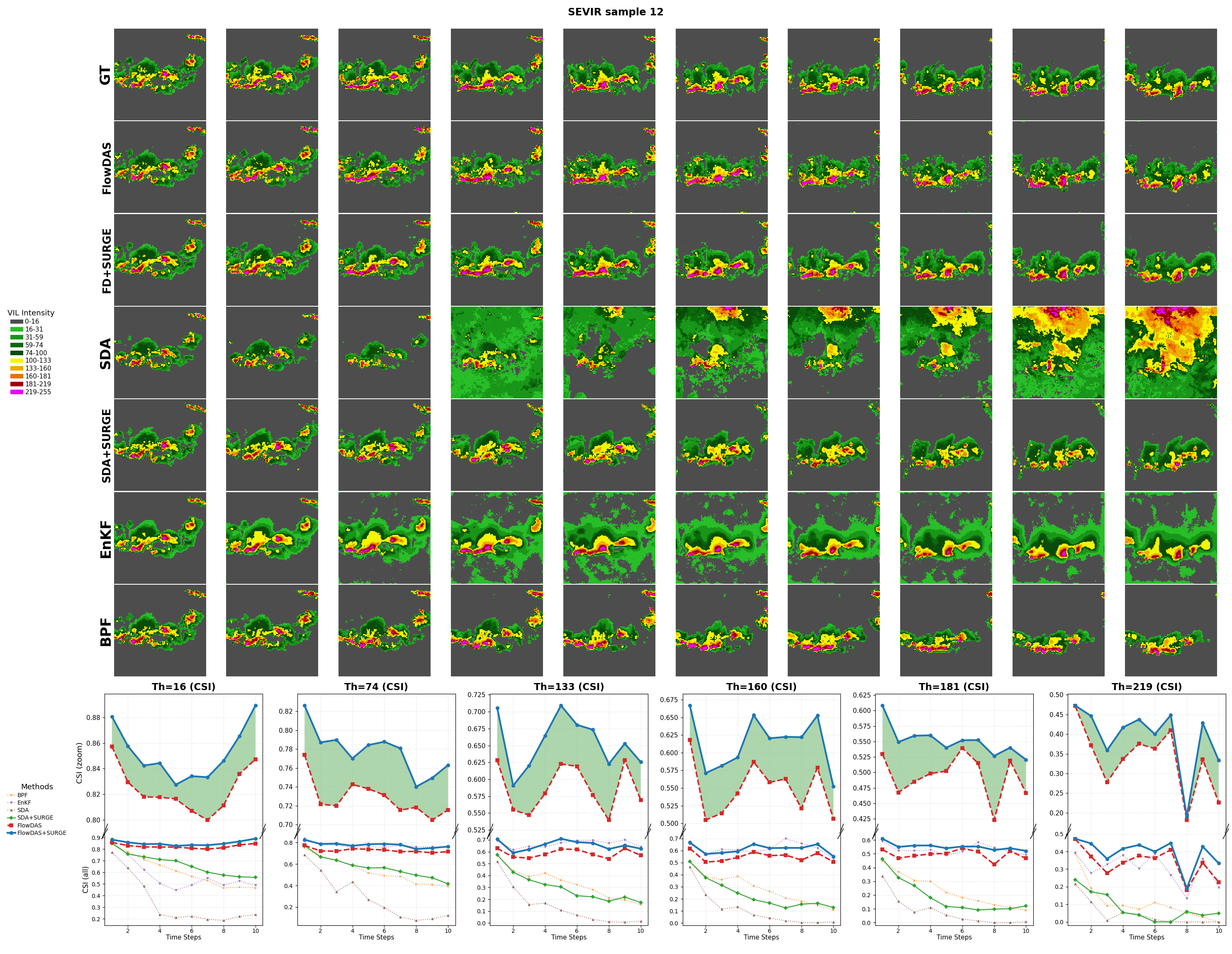}}
\caption{More trajectory wise comparison between baselines and \texttt{SURGE} on weather forecasting in terms of VIL intensity visualization and Critical Success Index (CSI), demonstrating that \texttt{SURGE} consistently outperforms baselines.}
\label{fig:wfappendix1}
\end{center}
\end{figure}

\begin{figure}[ht]
\begin{center}
\centerline{\includegraphics[width=.8\columnwidth]{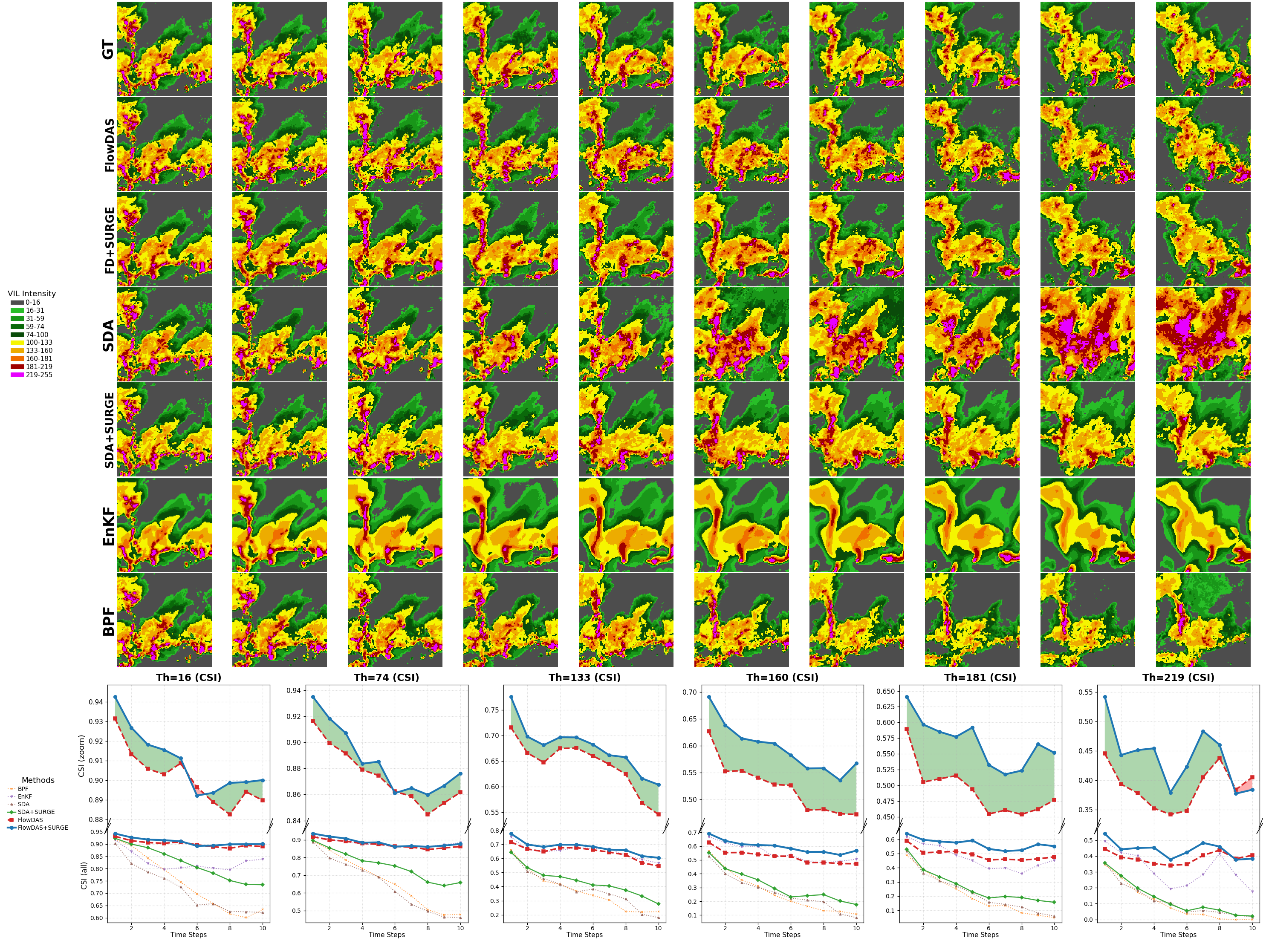}}
\caption{More trajectory wise comparison between baselines and \texttt{SURGE} on weather forecasting in terms of VIL intensity visualization and Critical Success Index (CSI), demonstrating that \texttt{SURGE} consistently outperforms baselines.}
\label{fig:wfappendix2}
\end{center}
\end{figure}

\begin{figure}[ht]
\begin{center}
\centerline{\includegraphics[width=.8\columnwidth]{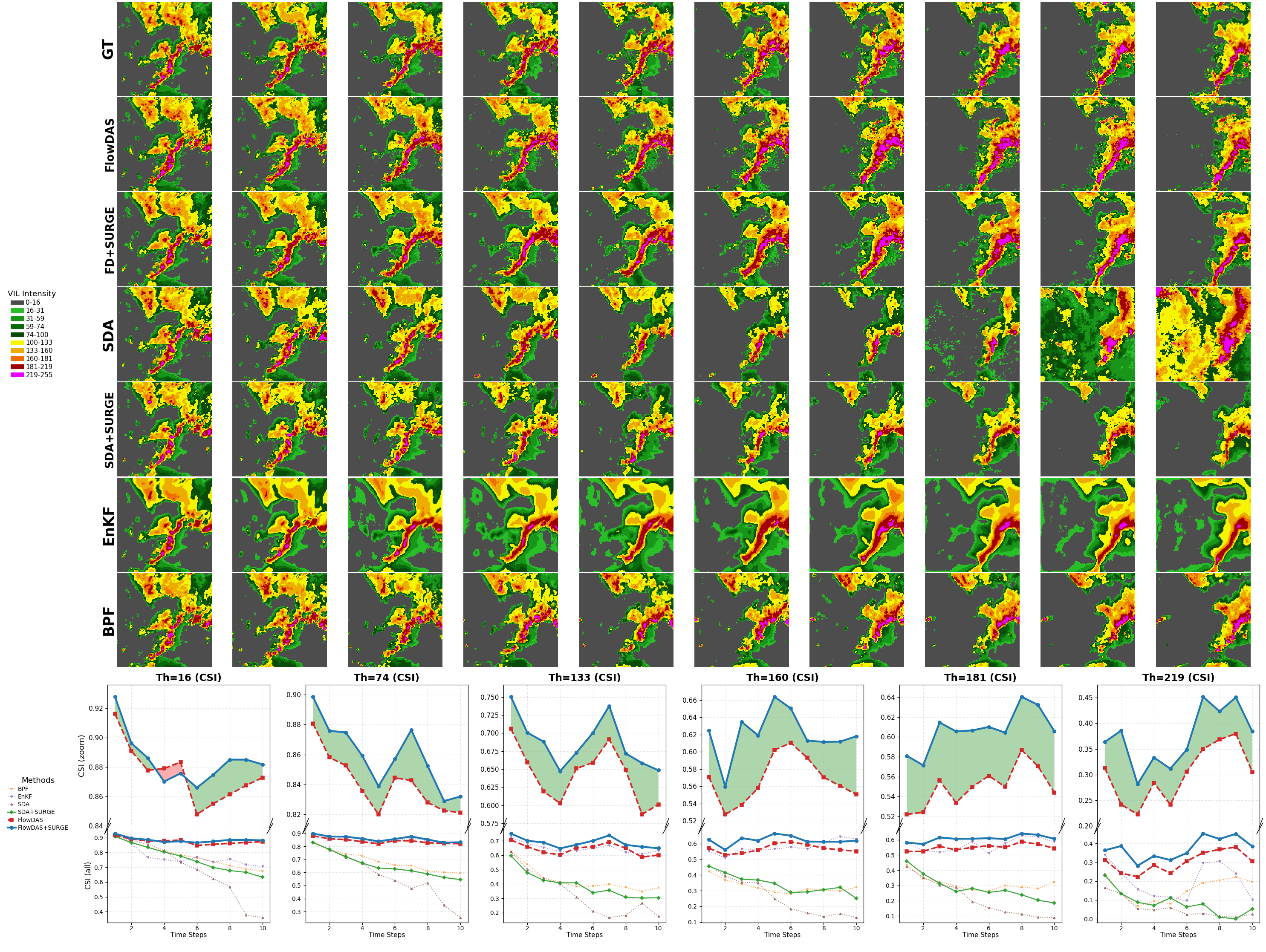}}
\caption{More trajectory wise comparison between baselines and \texttt{SURGE} on weather forecasting in terms of VIL intensity visualization and Critical Success Index (CSI), demonstrating that \texttt{SURGE} consistently outperforms baselines.}
\label{fig:wfappendix3}
\end{center}
\end{figure}

\begin{figure}[ht]
\begin{center}
\centerline{\includegraphics[width=.8\columnwidth]{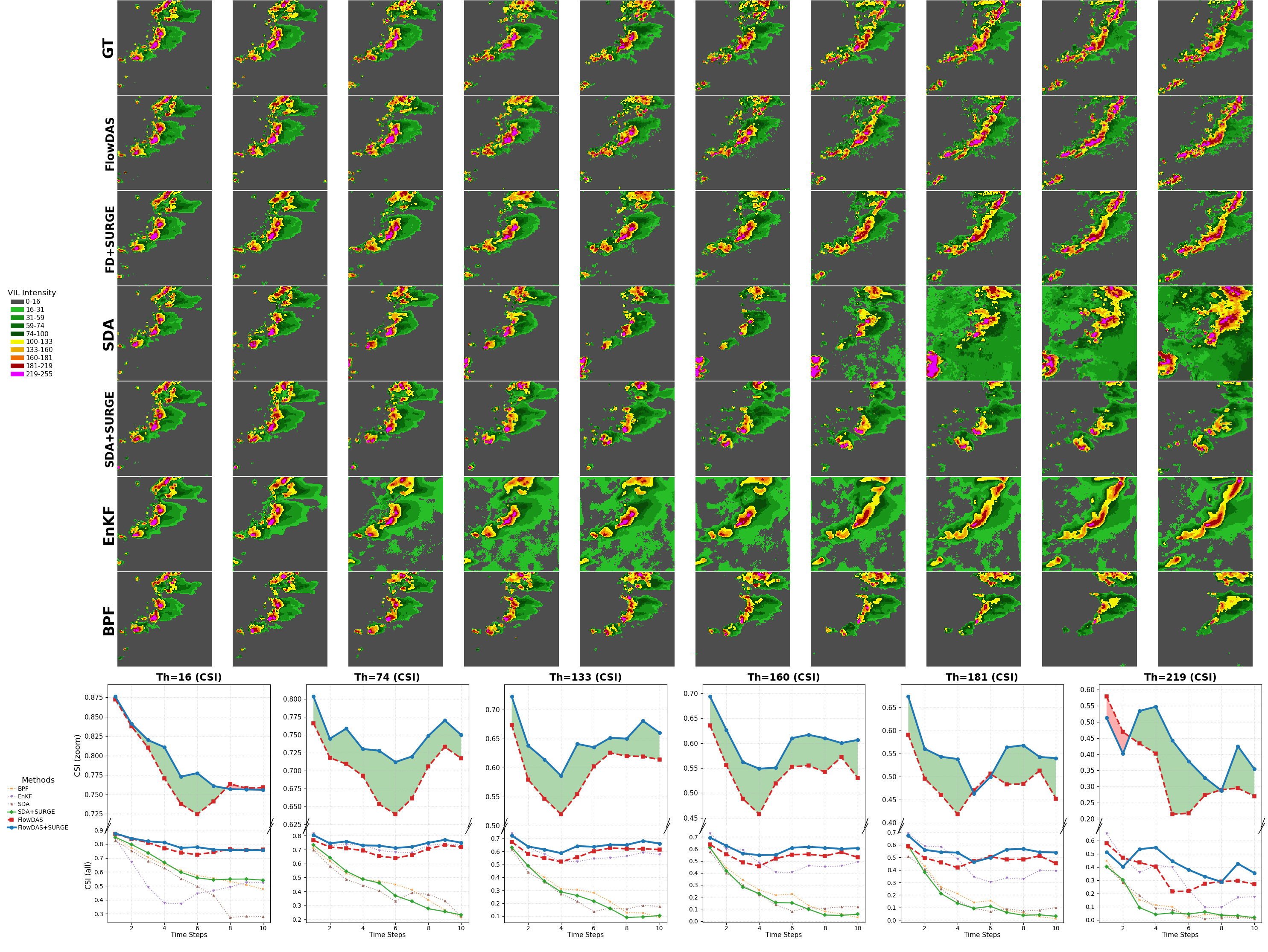}}
\caption{More trajectory wise comparison between baselines and \texttt{SURGE} on weather forecasting in terms of VIL intensity visualization and Critical Success Index (CSI), demonstrating that \texttt{SURGE} consistently outperforms baselines.}
\label{fig:wfappendix4}
\end{center}
\end{figure}

\begin{figure}[ht]
\begin{center}
\centerline{\includegraphics[width=.8\columnwidth]{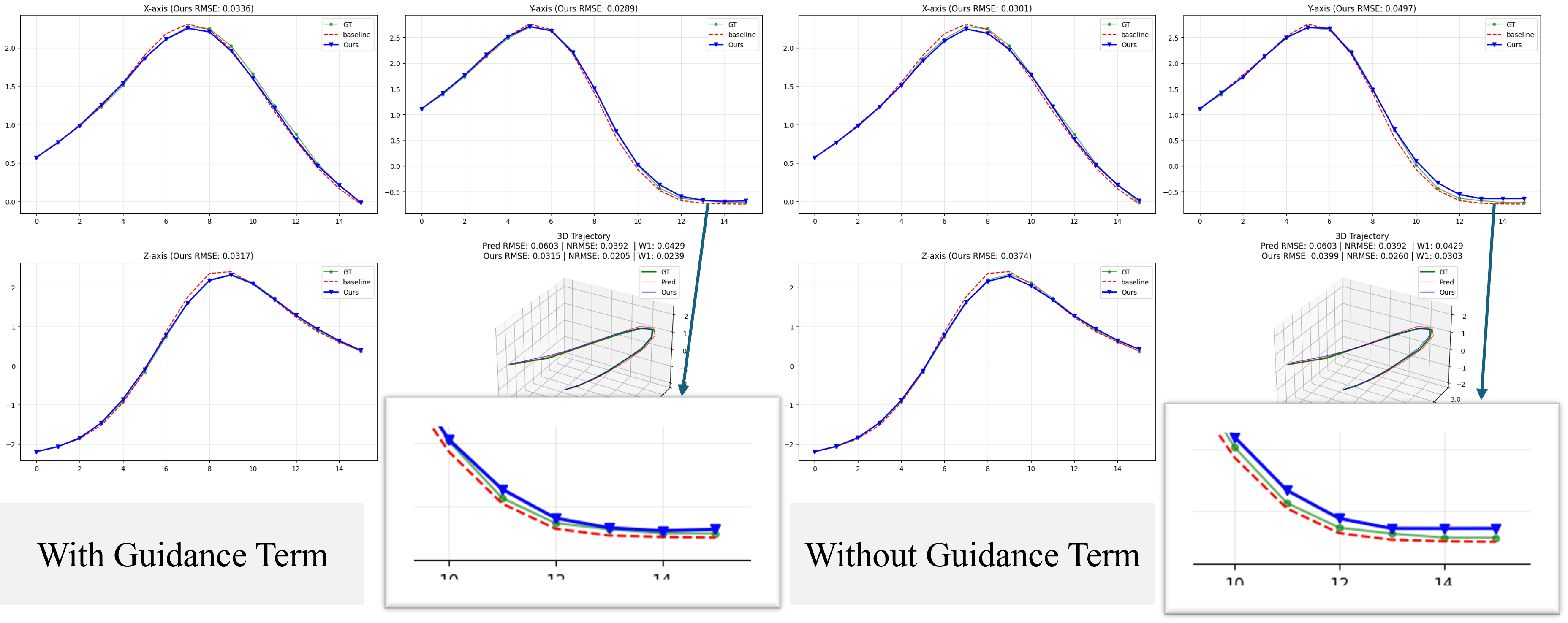}}
\caption{Ablation of the guidance term. Without guidance (right), the trajectory fails to correct drift and degrades to the baseline, highlighting the term's necessity for accurate tracking (left).}
\label{fig:abla_guid_lorenz}
\end{center}
\end{figure}

\begin{figure}[ht]
\begin{center}
\centerline{\includegraphics[width=.8\columnwidth]{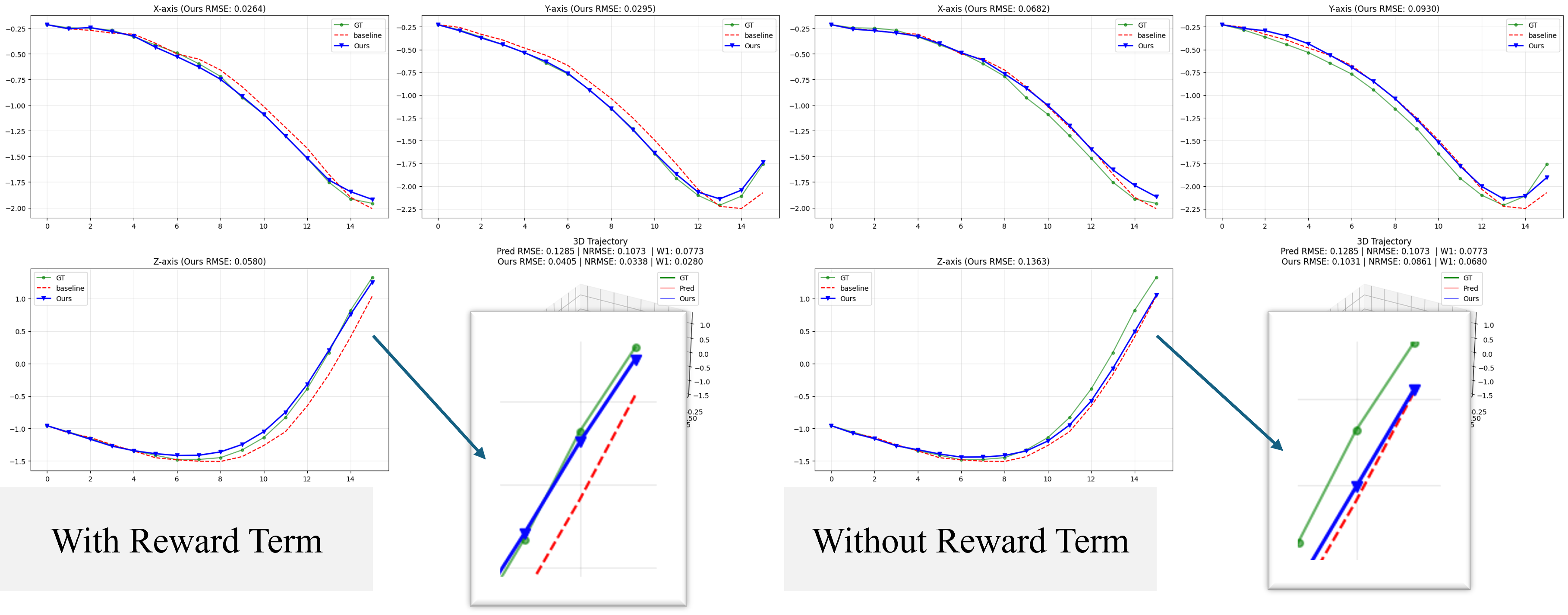}}
\caption{Ablation of the reward term. Without reward (right), the trajectory fails to correct drift and degrades to the baseline, highlighting the term's necessity for accurate tracking (left).}
\label{fig:abla_reward_lorenz}
\end{center}
\end{figure}

\begin{figure}[ht]
\begin{center}
\centerline{\includegraphics[width=.8\columnwidth]{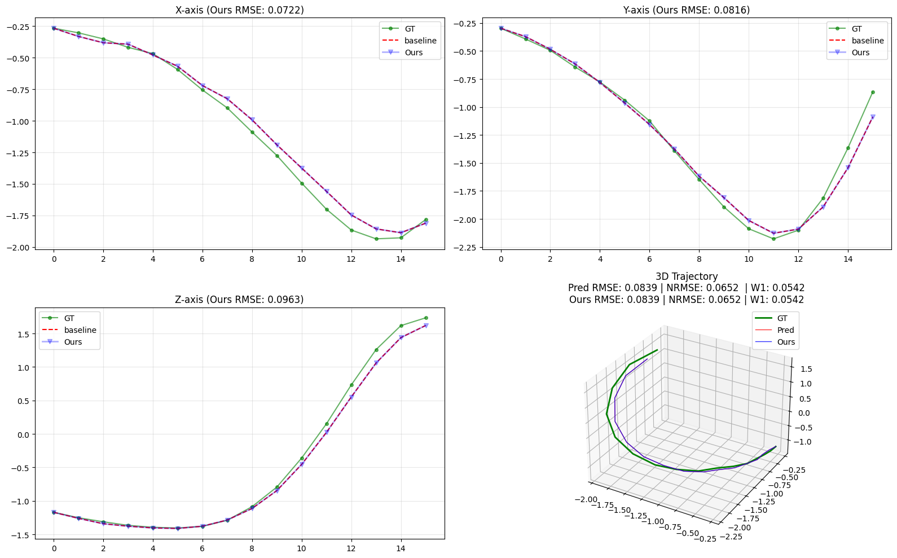}}
\caption{Ablation of \texttt{SURGE} weight computing and resampling. The trajectory is same as FlowDAS predicted.}
\label{fig:abla_all_lorenz}
\end{center}
\end{figure}
\begin{table}[t]
\caption{Quantitative result in ablation study of Lorenz system prediction task.}
\label{tab:lorenz_ablation}
\begin{center}
\begin{small}
\begin{sc}
\begin{tabular}{lccc}
\toprule
Method & RMSE $\downarrow$ & $W_1$ $\downarrow$\\
\midrule
FlowDAS w/o DA         & $0.0666$ & $0.0479$ \\

FlowDAS         & $0.0545$ & $0.0388$ \\
\textbf{SURGE} & \boldmath{$0.0502$} & \boldmath{$0.0363$}\\
SURGE w/o Reward & $0.0647$ & $0.0486$\\
SURGE w/o Guidance & $0.0608$ & $0.0428$\\
\bottomrule
\end{tabular}
\end{sc}
\end{small}
\end{center}
\end{table}

\subsection{Computational Resources}
All experiments were conducted on a single NVIDIA RTX PRO 6000 with 96GB memory. Training the diffusion model required approximately 1.5 hours for Lorenz system and 2 hours for Navier-stokes flow. We utilize pretrained model form FlowDAS in weather forecasting task. Inference with the \texttt{SURGE} filter took approximately 30 seconds per time step for Lorenz system, 0.5 seconds per time step for Navier-stokes and 100 seconds per time step based one the same setting with FlowDAS.


\end{document}